\let\showComments\undefined 
    \newcommand{\nm}[1]{{\color{magenta}Nadav:  #1}}
    \newcommand{\hr}[1]{{\color{red}Hugo:  #1}}
    \newcommand{\pararef}[1]{(\ref{#1})}
    \newcommand\marc[1]{{\color{blue}Marc: #1}}
    \newcommand\Corentin[1]{{\color{orange}Corentin: #1}}
    \newcommand{\nm}[1]{}
    \newcommand{\hr}[1]{}
    \newcommand{\pararef}[1]{}
    \newcommand\marc[1]{}
    \newcommand\Corentin[1]{}
\begin{document}
\raggedbottom 
\setlength{\parskip}{0pt} 
%

%

\twocolumn[

\aistatstitle{On the Hardness of Reinforcement Learning with Transition Look-Ahead}

\aistatsauthor{ Corentin Pla  \And  Hugo Richard  \And  Marc Abeille  \And  Nadav Merlis  \And  Vianney Perchet}

\aistatsaddress{ CREST, ENSAE \\ Criteo AI Lab \\ FairPlay Joint Team \And  Criteo AI Lab \\ FairPlay Joint Team \And Criteo AI Lab \\ FairPlay Joint Team \And Technion \And CREST, ENSAE \\ Criteo AI Lab \\ FairPlay Joint Team  } ]

\begin{abstract}
We study reinforcement learning (RL) with transition look-ahead, where the agent may observe which states would be visited upon playing any sequence of $\ell$ actions before deciding its course of action.
While such predictive information can drastically improve the achievable performance, we show that using this information optimally comes at a potentially prohibitive computational cost.
Specifically, we prove that optimal planning with one-step look-ahead ($\ell=1$) can be solved in polynomial time through a novel linear programming formulation. In contrast, for $\ell \geq 2$, the problem becomes NP-hard. 
Our results delineate a precise boundary between tractable and intractable cases for the problem of planning with transition look-ahead in reinforcement learning.
\end{abstract}

\section{Introduction}

Reinforcement Learning (RL) \citep{sutton2018reinforcement} addresses the problem of learning how to act in a dynamic environment. This problem is modeled via a \emph{Markov Decision Process} (MDP) which involves a \emph{transition kernel}, describing how \emph{states} of the environment evolve in response to the agent’s \emph{actions}, and a \emph{reward function}, providing feedback to the agent for taking a particular action in a given state. The agent’s goal is to select actions that maximize the cumulative collected reward called \emph{return}, accounting not only for immediate gains but also for the long-term impact of its decisions on the state dynamics \citep{jaksch2010near, azar2017minimax, jin2018q}. In this work, we focus on \emph{stationary} MDP, in which the reward function and transition kernel are independent of time.

\paragraph{}In the standard RL framework, the reward and the next state are revealed only after an action has been taken. However, \emph{RL with look-ahead} assumes that, in addition to this underlying dynamical model, extra predictive information is available at decision time. In \emph{transition look-ahead}, the agent may observe before taking its action, which states would be visited upon playing any sequence of actions of length $\ell$.
This captures situations where one benefits from privileged information channels beyond standard interaction. A typical example is collaborative navigation systems that allow real-time traffic information (e.g., Waze, Coyote...) where information from nearby drivers can be used to estimate future position, speed, and traffic conditions given a sequence of routing decisions~\citep{vasserman2015implementing}. Other examples include 
access to high-fidelity but expensive simulators that can provide look-ahead on demand, or supply-chain systems where estimated delivery or arrival times are provided in advance. Standard RL algorithms do not come with off-the-shelf tools to incorporate look-ahead, and a naive policy would be to just discard this additional information. By leveraging transition look-ahead, however, the agent can anticipate the consequences of its actions before execution, enabling more effective planning while reducing uncertainty about near-term dynamics.

\paragraph{Related Work and Contribution.} 
Our main result identifies a surprising complexity threshold: planning with one-step transition look-ahead ($\ell=1$) is solvable in polynomial time, and we provide an explicit linear programming formulation. In stark contrast, planning with multi-step transition look-ahead ($\ell \geq 2$) is NP-hard. This establishes a sharp separation between tractable and intractable regimes, uncovering a fundamental frontier in the computational complexity of reinforcement learning with predictions. 

\paragraph{}
The idea of augmenting reinforcement learning with look-ahead has recently begun to attract attention.
\citet{merlis2024reinforcementlearninglookaheadinformation} introduced a pseudo-polynomial algorithm for planning with $\ell$-transition look-ahead with $\ell=1$ in the finite horizon setting.
However, in stationary MDPs, there is no polynomial-time algorithm to solve planning in the finite horizon objective~\citep{balaji2018complexity}. This makes this objective less natural for studying the hardness of look-ahead than the discounted or average objectives we consider, for which planning without look-ahead can be solved in polynomial time.  Reinforcement learning with look-ahead is also studied in~\citet{merlis2024valuerewardlookaheadreinforcement} for general $\ell \in \mathbb{N}^*$. However, the authors study \emph{reward} look-ahead, whereas we study \emph{transition} look-ahead. Furthermore, authors focus on value improvements while we study the computational complexity of planning.

\paragraph{} 
A related line of work comes from the control literature, in particular Model Predictive Control (MPC) \citep{CamachoBordons2013}.
MPC addresses the difficulty of accurately forecasting long-term system trajectories—especially under model misspecification or nonlinear dynamics—by repeatedly solving a simplified short-horizon optimization problem and then updating the control action according to the realized system state.
In this sense, short-horizon system forecasts play a role analogous to look-ahead information. However, an important distinction is that in MPC, the forecasts are obtained by simulating 
an approximate model of the system, which may be misspecified and therefore not coincide with the true dynamics.
By contrast, in our setting, transition look-ahead provides exact information about the actual next states under the environment’s true transition kernel.
Connections between MPC and reinforcement learning have been drawn in several works \citep{tamar2017learninghindsightplan, efroni2020onlineplanninglookaheadpolicies}, primarily with the goal of improving planning efficiency or coping with disturbances.
Some recent studies even analyze the dynamic regret or competitive ratio of controllers with partial look-ahead compared to those with full information \citep{li2019onlineoptimalcontrollinear, ZhangLi2021OnlineLQRwithPredictions, lin2021perturbationbasedregretanalysispredictive, lin2022boundedregretmpcperturbationanalysis}.
However, while prior studies on MPC-with-predictions analyze how well a controller performs given short-horizon forecasts, we ask a different question: how hard is it to compute an optimal plan when the planner has transition look-ahead in a discrete stationary MDP? In particular, our results prove that the tractable MPC controllers are necessarily sub-optimal. 

\paragraph{} 
Our work also connects to the broader literature on the computational complexity of solving MDPs. The foundational study of \citet{papadimitriou1987complexity} establishes, among other results, that computing optimal policies in stationary MDPs is P-complete for both discounted and average objectives. Subsequent work investigated finite-horizon MDPs in greater detail~\citep{Mundhenk2000Complexity,littman2013complexitysolvingmarkovdecision,balaji2018complexity}, with \citet{balaji2018complexity} showing EXPTIME-hardness for planning in stationary MDPs under finite-horizon objectives. More recent results also address discounted MDPs~\citep{chen2017lowerboundcomputationalcomplexity}. 

\paragraph{}
Beyond these classical formulations, several works highlight how modifications of the \emph{information structure} affect computational hardness. On the one hand, reducing information typically makes planning harder: for example, reinforcement learning with delayed feedback (where the agent receives rewards and/or transitions only after a lag) has been studied by \citet{walsh2009learning}, who show that planning in constant-delayed MDPs is NP-hard in general due to the exponential blowup of the augmented state space, while also proposing tractable algorithms for deterministic or mildly stochastic cases. More generally, POMDPs illustrate how partial observability raises the complexity to PSPACE-completeness in finite horizon and even undecidability in infinite horizon \citep{papadimitriou1987complexity}. On the other hand, our results show that \emph{increasing} the information available to the agent---by granting exact transition look-ahead---can also lead to intractability: while one-step look-ahead remains efficiently solvable, multi-step look-ahead renders the planning problem NP-hard. Thus, transition look-ahead complements the existing literature by identifying a new axis where computational complexity undergoes a phase transition: both information loss and information gain can fundamentally alter the hardness of planning.

\paragraph{} 
On the algorithmic side, linear–programming (LP)  formulations \citep{puterman2014markov} for both discounted and average–reward MDPs have been established since the foundational works of \citet{ManneLP, depenouxLP}. This line of work advocates LP methods as an alternative to dynamic–programming approaches. The distinction matters here: Value Iteration (VI) and standard Policy Iteration (PI), although widely used and efficient in practice, do not admit polynomial–time guarantees 
neither in the discounted case~\citep{feinberg2013valueiterationalgorithmstrongly, hollanders2012cdc} nor in the average case~\citep{fearnley2010}. For one–step look-ahead ($\ell=1$), our positive result gives an LP formulation that yields a polynomial-time algorithm. In sharp contrast, for $\ell\ge 2$ we prove NP–hardness, pinpointing the look-ahead depth as the tractability/intractability threshold in tabular MDPs.%

\paragraph{} 
Our setting is also related to the growing literature on algorithms with predictions \citep{mitzenmacher2020algorithmspredictions, benomar2025paretooptimalitysmoothnessstochasticitylearningaugmented,benomar2025tradeoffslearningaugmentedalgorithms, pmlr-v202-merlis23a}. These works study how providing side information can help algorithms go beyond worst-case performance, often quantifying trade-offs between consistency (when predictions are accurate) and robustness (when predictions are wrong). In reinforcement learning, this perspective has recently been explored in several directions.  \citet{NEURIPS2024_9dff3b83} design a learning-augmented controller for LQR with latent perturbations, showing that accurate predictions yield near-optimality while robustness can still be preserved under prediction errors. \citet{lyu2025efficientlysolvingdiscountedmdps} propose a framework where predictions on the transition matrix of discounted MDPs can reduce sample complexity bounds. Finally, \citet{li2023blackboxadvicelearningaugmentedalgorithms} establish a consistency–robustness tradeoff when predictions come as Q-values in non-stationary MDPs. A key distinction from our contribution is that, while these works show that predictions can improve performance beyond worst-case guarantees, we show in the context of transition look-ahead that optimally leveraging such predictions can dramatically increase the computational complexity.

\section{Problem Formulation}
\subsection{Markov decision processes (MDP) and evaluation criteria}

We study finite tabular Markov decision processes (MDPs)
\[
\mathcal{M}=(\mathcal{S},\mathcal{A},P,r),
\]
where $\mathcal{S}$ is a finite state space, $\mathcal{A}$ is a finite action space, 
$P(s'\mid s,a)$ denotes the probability of reaching state $s' \in \mathcal{S}$ after taking action $a \in \mathcal{A}$ in state $s \in \mathcal{S}$, and
$r:\mathcal{S}\times\mathcal{A}\to[0,R_{\max}]$ is the reward function.

A (possibly randomized) stationary memoryless policy\footnote{For both the discounted and average-reward criteria considered here, one can restrict without loss of optimality to stationary memoryless policies; see, e.g., \citet{puterman2014markov}.}
is a mapping
\[
\pi:\mathcal{S}\to\Delta(\mathcal{A}),
\]
where $\Delta(\mathcal{A})$ denotes the simplex over $\mathcal{A}$. Under such a policy, the interaction evolves as follows: at each time $t\in\mathbb{N}$, the system is in state $s_t\in\mathcal{S}$, the agent selects an action $a_t\sim \pi(\cdot\mid s_t)$, receives reward $r(s_t,a_t)$, and the next state is sampled according to
\[
s_{t+1}\sim P(\cdot\mid s_t,a_t).
\]

In this paper, we focus on the discounted return and the long-run average reward, which are the canonical objectives for which planning in standard MDPs is known to be polynomial-time solvable. By contrast, note that finite–horizon objectives are less suited to our complexity analysis as there is no polynomial planning algorithm in this case, even without look-ahead information (see \citealt{balaji2018complexity}).

\paragraph{Discounted return.}
The most classical formulation assigns geometrically decaying weights to future rewards \citep[][Chapter 6]{puterman2014markov}.  For a discount factor $\gamma\in(0,1)$ and an initial state $s\in\mathcal{S}$, the value of a policy $\pi$ is
\begin{equation}
v_\gamma^\pi(s)
=
\mathbb{E}^\pi\!\left[
\sum_{t=0}^{\infty}\gamma^t r(s_t,a_t)
\,\middle|\,
s_0=s
\right].
\label{eq:eval-discounted}
\end{equation}
The optimal discounted value function is defined by
\begin{equation}
v_\gamma^*(s)
=
\sup_{\pi} v_\gamma^\pi(s),
\qquad \forall s\in\mathcal{S},
\label{eq:def-opt-discounted}
\end{equation}
where the supremum is taken over stationary memoryless policies. It is well known that $v_\gamma^*$ is the unique solution of the Bellman optimality equations $ \forall s \in \mathcal{S},$
\begin{equation}
v_\gamma^*(s)
=
\max_{a\in\mathcal{A}}
\Bigl\{
r(s,a)+\gamma\sum_{s'\in\mathcal{S}}P(s'\mid s,a)\,v_\gamma^*(s')
\Bigr\} .
\label{eq:opt-discounted}
\end{equation}
and that the optimum is attained by a stationary deterministic policy.

\paragraph{Average reward criterion.}
A second perspective focuses on the asymptotic regime, where transient effects vanish and performance is measured by the long-run average reward \citep[][Chapter 8]{puterman2014markov}. For a stationary policy $\pi$, the value for starting in $s$ is
\begin{equation*}
g^\pi(s)
=\lim_{T\to\infty}\frac{1}{T}\,
\mathbb{E}\!\left[\sum_{t=0}^{T-1} r(s_t,a_t)\;\middle|\;s_0=s,\ a_t\sim\pi(s_t)\right].
\label{eq:eval-average}
\end{equation*}
In the average case, it is standard to work under the unichain assumption: 

\begin{assumption}[Unichain MDP]
An MDP $\mathcal{M}=(\mathcal{S},\mathcal{A},P,r),$ satisfies the Unichain assumption if for any stationary policy $\pi$, for any $s\in S$, the return time 
\begin{equation*}
    \tau_s \;=\; \inf\{ t\ge 1 : S_t = s \mid S_0 = s\}
\end{equation*}
satisfies $\mathbb E_{P,\pi}[\tau_s] < \infty$.

\label{ass:unichain}
\end{assumption}

\Cref{ass:unichain} ensures that $g^\pi(s)$ does not depend on $s$, so we simply write $g^\pi$. The transient deviations from this average are measured by the \emph{bias function} $h^\pi:\mathcal{S}\to\mathbb{R}$, defined  as
\begin{equation*}
h^\pi(s) \;=\;
\mathbb{E}^\pi\!\left[\;\sum_{t=0}^{+\infty} \big( r(s_t,a_t) - g^\pi \big) \;\middle|\; s_0=s \right],
\end{equation*}
and the optimal gain $g^*$ is defined as
\begin{equation*}
g^* \;=\; \sup_{\pi}\; g^\pi.
\end{equation*}
The optimal gain/bias pair $(g^*,h^*)$ is uniquely characterized (for $h^*$, up to an additive constant) by the Bellman optimality equation: 
\begin{align*}& \forall s \in \mathcal{S}, \\&
g^*+h^*(s)=\max_{a\in\mathcal{A}}
\Big\{r(s,a)+\sum_{s'}P(s'\mid s,a)\,h^*(s')\Big\}.
\label{eq:opt-average}
\end{align*}

\medskip

\subsection{Transition look-ahead as state augmentation}

In the next section, we formalize the extra information provided by the look-ahead in terms of state observability and provide an augmented MDP construction that allows us to embed this problem into the standard evaluation framework introduced above.

\subsubsection{Look-ahead and state observability}

In the standard model, the agent only observes its current state $s_t$ before acting. We extend this by allowing the agent to query an $\ell$-step \emph{transition look-ahead}: before choosing an action, the agent is provided with the entire $\ell$-step transition tree rooted at $s_t$, i.e., all \emph{realizations of future states} that may occur within $\ell$ steps under every possible action sequence.

\paragraph{}
Let $\mathcal{M}=(\mathcal{S},\mathcal{A},P,r)$ be the MDP in which the agent is provided with $\ell$-step look-ahead information. To formalize this notion, we define the following random operator.

\begin{definition}[Push-Forward Operator]
Fix $t\in\mathbb{N}$ and an action sequence $\bar{a}_k=(a_1,\dots,a_k) \in \mathcal{A}^k$. The Push-Forward Operator, denoted by $\Pi[s_t,k,\bar{a}_k]$, is a random variable taking values in $\mathcal{S}$, corresponding to the state reached from $s_t$ after $k$ steps when actions $\bar{a}_k$ are applied.
\end{definition}

In other words, for any action sequence $\bar{a}_k$, the random variable $\Pi[s_t,k,\bar{a}_k]$ represents the state $s_{t+k}$ that would be obtained if the sequence $\bar{a}_k$ were followed from time $t$.

The look-ahead information contains all realizations of $\ell$-step trajectories and is formally defined as follows:

\begin{definition}[$\ell$-step transition look-ahead]
An agent interacting with $\mathcal{M}$ is said to be provided with $\ell$-step transition look-ahead if, for all $t \in \mathbb{N}$, it observes
\begin{equation*}
\left(\Pi[s_t, k, \bar{a}_k]\right)_{\bar{a}_k \in \mathcal{A}^k,\; 0 \le k \le \ell}
\;\in\;
\prod_{k=0}^{\ell} \mathcal{S}^{\mathcal{A}^k}.
\end{equation*} before acting. 
\end{definition}

\begin{remark}

\textbf{(i) Base case.} For all $t \in \mathbb{N}$,
\[
\Pi[s_t,0,(\emptyset)] = s_t.
\]
In particular, $0$-step transition look-ahead corresponds to the standard observation without any transition look-ahead.

\textbf{(ii) Recursion.} The collection of random variables $(\Pi[s_t,k,\bar{a}_k])$ is assumed to be \emph{prefix-consistent}: for any $k \ge 1$ and any $\bar{a}_{k+1} \in \mathcal{A}^{k+1}$,
\begin{equation}
\Pi[\Pi[s_t,1,a_1],k,\bar{a}_{2:k+1}]
=
\Pi[s_t,k+1,\bar{a}_{k+1}].
\label{eq:pi recursion}
\end{equation}

\textbf{(iii) Distribution.} For any $s \in \mathcal{S}$ and any $\bar{a}_k=(a_1,\dots,a_k)$,
\begin{align}
&\mathbb{P}\big(\Pi[s_t,k,\bar{a}_{1:k}] = s' \big) \notag 
\\&=
\mathbb{E}_{\Pi[s_t,k-1,\bar{a}_{1:k-1}]}\!\left[
P\big(s' \mid \Pi[s_t,k-1,\bar{a}_{1:k-1}], a_k\big)
\right].
\end{align}

\end{remark}

\subsubsection{Augmented MDP}
\label{subsec:augmented}

\paragraph{State and action space}
We construct an \emph{augmented MDP} $\bar{\mathcal{M}}=(\bar{\mathcal{S}},\bar{\mathcal{A}},\bar{P},\bar{r})$ such that an agent interacting with $\mathcal{M}$ with $\ell$-step transition look-ahead is equivalent to a standard agent interacting in $\bar{\mathcal{M}}$.

Let $t \in \mathbb{N}$. We define the \emph{augmented state} $\xi_t \in \bar{\mathcal{S}}$ as
\[
\bar{\mathcal{S}}
=
\prod_{k=0}^{\ell} \mathcal{S}^{\mathcal{A}^k},
\]
and
\begin{equation*}
\xi_t = \left(\xi_t[k]\right)_{0 \le k \le \ell}, 
\qquad
\text{where } \forall k \in \{0,\dots,\ell\},
\end{equation*}
\begin{equation*}
\xi_t[k] = \left(\Pi[s_t, k, \bar{a}_k]\right)_{\bar{a}_k \in \mathcal{A}^k}
\in \mathcal{S}^{\mathcal{A}^k}.
\end{equation*}

\paragraph{}
Note that although the look-ahead reveals the outcomes of \emph{hypothetical} action trajectories of length $\ell$, the agent has no incentive to commit in advance to executing the entire sequence. Indeed, if at time $t$ the agent fixes an action sequence $(a_t,\dots,a_{t+\ell-1})$, then the last $\ell-1$ actions are chosen without exploiting the additional look-ahead information that will be revealed at subsequent steps. Such a strategy therefore uses strictly less information than a policy that selects actions sequentially, conditioning on newly observed look-ahead signals. Consequently, it is without loss of optimality to restrict attention to policies that select only the current action.

The action set is therefore unchanged:
\[
\bar{\mathcal{A}} = \mathcal{A}.
\]

\paragraph{Transition and reward model}

For any $j,k,l \in \mathbb{N}$ with $j<k<l$ and any sequence $\bar{a}\in\mathcal{A}^l$, we denote $\bar a_{j:k}=(a_j,\dots,a_k)$ and $\bar a_j = \bar a_{1:j}$. We also define, for $i \le \ell$ and any prefix $\bar a'_j \in \mathcal{A}^j$,
\[
\xi_t[i](\bar a'_j)
=
\left(\Pi[s_t,i,\bar a_i]\right)_{\substack{\bar a_i \in \mathcal{A}^i \\ \bar a_j = \bar a'_j}},
\]
i.e., the restriction of the $i$-step look-ahead to action sequences with prefix $\bar a'_j$.

\paragraph{}
For any $t\in\mathbb{N}$, the first $\ell$ blocks of $\xi_t$ evolve deterministically given $(\xi_t,a_t)$, due to the consistency property of $\Pi$ (see \eqref{eq:pi recursion}). More precisely, for $k=0,\dots,\ell-1$,
\[
\xi_{t+1}[k]
=
\xi_t[k+1](a_t),
\]
where $\xi_t[k+1](a_t)$ denotes the restriction of $\xi_t[k+1]$ to action sequences whose first action is $a_t$. By contrast, the last block $\xi_{t+1}[\ell]$ is stochastic, as it corresponds to newly generated look-ahead values.

\paragraph{}
The transition kernel $\bar P$ is therefore defined as follows: for any $\xi,\xi' \in \bar{\mathcal{S}}$ and $a\in\mathcal{A}$,
\begin{align*}
&\bar P_a(\xi,\xi')=
\\& 
\mathbb{1}\{
\forall k=0,\dots,\ell-1,\;
\xi'[k]=\xi[k+1](a) \}
\\
&\quad \times
\mathbb{P}\Big(
\xi'[\ell]
=
\big(\Pi[s_{t+1},\ell,\bar a_\ell]\big)_{\bar a_\ell\in\mathcal{A}^\ell}
\;\Big|\;
\xi_t=\xi,\; a_t=a
\Big).
\end{align*}
We refer the reader to Appendix \ref{sec:transition} for detailed computation of the transition kernel. 
\paragraph{}
Finally, let $\xi \in \bar{\mathcal{S}}, a \in \mathcal{A} $ the reward function in the augmented MDP is defined by
\begin{equation*}
\bar r(\xi,a)=r(\xi[0],a),
\end{equation*}
since $\xi[0]$ encodes the current state.

\subsection{Decision problems}
To analyze the computational complexity of planning with transition look-ahead, 
we work with standard \emph{decision-problem} formulations.  
These are classical complexity–theoretic encodings of the main planning objectives 
(discounted and average reward).

\paragraph{} We focus on $\ell$-look-ahead decision problems where the agent is endowed with $\ell$--step transition look-ahead as defined
in the previous section. The look-ahead depth $\ell \in \mathbb{N}$ is thus
a fixed constant of the problem definition and not an input parameter. In particular, this implies that an algorithm that solves the decision problem after say $(\Scal \Acal)^\ell$ operations is polynomial. 
Our complexity results should be
interpreted in the same spirit as classical $k$--SAT: while $2$--SAT is
polynomially solvable, $3$--SAT is NP-hard. Analogously, we establish that
planning is tractable for $\ell=1$, but NP-hard for $\ell \ge 2$.

\paragraph{} Formally, each problem takes as input an MDP instance together with 
parameters describing the evaluation criterion, and asks whether there exists 
a (possibly randomized) policy whose value exceeds a prescribed threshold.  
Following the standards in complexity theory, the numerical values of the input, such as the discount factor $\gamma$, rewards, transition matrix, or threshold $\theta$, are encoded in binary. It implies that an algorithm with $O(\log(R_{max}))$ complexity is polynomial (where $R_{max}$ is the maximum of the reward function), but not an algorithm with complexity $O(R_{max})$.

\begin{definition}[Discounted Value Decision Problem ($\ell$-DVDP)]
\label{def:DVDP}
Instance: a finite MDP $\mathcal{M}=(\mathcal S,\mathcal A,P,r)$, an initial state $s_0\in\mathcal S$, $\gamma\in(0,1)$, and $\theta\in\mathbb R$. \\
Question: Does there exist a  policy (possibly randomized) $\pi$ such that
\begin{equation}
v^\pi_\gamma(s_0;\mathcal{M},\ell)\ \ge\ \theta\ ? 
\end{equation}
\end{definition}

\begin{definition}[Average-Reward Decision Problem ($\ell$-ARDP)]
\label{def:ARDP-body}
Instance: a finite MDP $\mathcal{M}=(\mathcal S,\mathcal A,P,r)$ and $\theta \in \mathbb{R}$. \\
Question: Does there exist a stationary (possibly randomized) policy $\pi$ such that
\begin{equation}
g^\pi(\mathcal{M},\ell)\ \ge\ \theta\ ?
\end{equation}
\end{definition}

These decision problems will serve as our canonical complexity-theoretic objects. When $\ell=0$ (no look-ahead), they are solvable in polynomial time via classical LP formulations; we will show that tractability extends to $\ell=1$, while $\ell\ge2$ renders each problem NP-hard, delineating a sharp complexity frontier for transition look-ahead.

\paragraph{} We emphasize that these decision problems are the right vehicle for hardness.
If there exists an algorithm that can compute the optimal value in polynomial time, it can be used to answer the decision question in one call by comparing its optimal value to~$\theta$. Therefore, the hardness of the decision problem implies the hardness of finding the optimal value function. 
In the other direction, an oracle that can solve the decision problems in polynomial time can be used to compute the optimal value up to any desired accuracy $\varepsilon > 0$ by bisection on $\theta$ with a complexity polynomial in the size of the input and $\log(1/\varepsilon)$. In this study, however, we solve the decision problems in the case $\ell=1$ by computing the optimal value and the optimal policy \emph{exactly}.

\section{Planning with One-Step Transition
look-ahead}
When $\ell=1$, the look-ahead representation simplifies to observing, at time $t$ before acting in $s_t$, the collection of one-step successors $\{\Pi[s_t,1,a]\}_{a\in\mathcal A} \in \mathcal{S}$. Note that in this case, $\xi \in \bar{\mathcal{S}}$ reduces to $(s,p)$, where $s \in \mathcal{S}$ and $p \in \mathcal{S}^{\mathcal{A}}$. Therefore, the agent chooses its action after observing the entire vector of next states. In the following, we focus on planning in this setting. For clarity of exposition, we restrict attention to the discounted criterion; the arguments extend with only minor changes to the average–reward case, as will be explained at the end of this section and in more detail in \Cref{sec:discounted-case,sec:average}.

\paragraph{}
A natural way to characterize one-step look-ahead planning is to write the linear program directly in the augmented state space. In the discounted case, for $\gamma \in (0,1)$ and strictly positive weights $(\bar \mu(\xi))_{\xi \in \bar{\mathcal{S}}}$, the optimal value function $\bar v^*:\bar{\mathcal{S}} \rightarrow \mathbb{R}$ can be obtained as the solution of:
\begin{align}
& \min_{\bar v} \;\sum_{\xi \in \bar{\mathcal{S}}} \bar \mu(\xi)\, \bar v(\xi), \notag\quad  \text{s.t. }  \forall \xi \in \bar{\mathcal{S}}: \\& 
\bar v(\xi) \;\ge\; \max_{a \in \mathcal{A}} 
   \Big\{ \bar r(\xi,a) + \gamma\,\mathbb{E}_{\xi' \sim \bar{P}_a(\xi)}\!\big[ \bar v(\xi') \big] \Big\}.
\label{eq:LP}
\end{align}
Although \eqref{eq:LP} is written over the augmented state space and thus involves a value function $\bar{v}$ indexed by exponentially many augmented states, it admits an equivalent polynomial-size reformulation, which ensures tractability in the discounted setting.  

\Cref{th:1LookGamma} shows that in finite tabular MDPs, planning with one-step transition look-ahead is solvable in polynomial time for both (i) the discounted and (ii) the average-reward criteria.

\begin{restatable}[One-step look-ahead is polynomial-time]{theorem}{OneLookGamma}
\label{th:1LookGamma}
$\ell$-DVDP and $\ell$-ARDP are solvable in polynomial time for $\ell \leq 1$.
\end{restatable}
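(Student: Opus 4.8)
The plan is to turn the exponential-size linear program \eqref{eq:LP} over the augmented state space $\bar{\mathcal{S}}$ into a polynomial-size one whose variables are indexed by the original states only. The key structural observation is that for $\ell=1$, an augmented state is a pair $\xi=(s,p)$ with $s\in\mathcal{S}$ and $p=(p_a)_{a\in\mathcal{A}}\in\mathcal{S}^{|\mathcal{A}|}$, where $p_a=\Pi[s,1,a]$ is the realized one-step successor under $a$. The optimal augmented value $\bar v^*(s,p)$ is, by the Bellman optimality equation, $\max_{a\in\mathcal{A}}\{r(s,a)+\gamma\,\bar v^{*,\mathrm{next}}(p_a)\}$, where $\bar v^{*,\mathrm{next}}(s')$ denotes the expected optimal augmented value \emph{before} the next look-ahead vector is revealed, i.e.\ $\bar v^{*,\mathrm{next}}(s')=\mathbb{E}_{p'\sim\otimes_a P_a(s',\cdot)}[\bar v^*(s',p')]$. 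The crucial point is that $\bar v^{*,\mathrm{next}}$ is a function of a single original state $s'$ — so there are only $|\mathcal{S}|$ such quantities. First I would establish this reduction: write everything in terms of the $|\mathcal{S}|$ unknowns $w(s'):=\bar v^{*,\mathrm{next}}(s')$, and note that the look-ahead value of $s'$ is $U(s',p')=\max_a\{r(s',a)+\gamma w(p'_a)\}$, so that $w(s')=\mathbb{E}_{p'}[U(s',p')]=\mathbb{E}_{p'}[\max_a\{r(s',a)+\gamma w(p'_a)\}]$. This is a fixed-point equation in $\mathbb{R}^{|\mathcal{S}|}$ of the same contraction type as the ordinary Bellman operator (the expectation of a max of affine functions is still $\gamma$-Lipschitz in $w$), so it has a unique solution.

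The second step is to express $\mathbb{E}_{p'}[\max_a\{r(s',a)+\gamma w(p'_a)\}]$ in a form amenable to linear programming. Here I would exploit that the coordinates $p'_a$ are \emph{independent} given $s'$, with $p'_a\sim P_a(s',\cdot)$; hence the joint distribution of $p'$ factorizes, and the expected maximum can be computed by ordering the candidate values $\{r(s',a)+\gamma w(\cdot)\}$ and summing over "which action attains the max." Concretely, if for a fixed $s'$ we sort the relevant quantities, the expectation of the max is $\sum$ over thresholds of $\mathbb{P}(\text{max}\ge \theta)$, which is a finite telescoping sum over at most $|\mathcal{S}||\mathcal{A}|$ breakpoints of products of the form $1-\prod_a P_a(s', \{\,t : \text{value}(a,t)<\theta\,\})$. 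This gives an \emph{exact} closed form for $w(s')$ as a piecewise-affine, concave function of $w$ — concavity because it is an expectation of a pointwise maximum of affine functions. One then linearizes in the standard LP way: introduce auxiliary variables for the per-outcome maxima and impose the affine lower-bound inequalities $u(s',p')\ge r(s',a)+\gamma w(p'_a)$ — but only for the polynomially many \emph{distinct} successor tuples that actually arise, which one enumerates from $P$. Minimizing $\sum_{s'}\mu(s') w(s')$ subject to $w(s')\ge \mathbb{E}_{p'}[u(s',p')]$ and the affine constraints recovers the unique fixed point, by the usual LP-duality/monotonicity argument for MDP LPs (the minimizing solution is forced to equality, hence equals $w=\bar v^{*,\mathrm{next}}$). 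Finally, recover $v^*_\gamma(s_0;\mathcal{M},1)=\mathbb{E}_{p}[\max_a\{r(s_0,a)+\gamma w(p_a)\}]$ and compare to $\theta$; all steps run in time polynomial in $|\mathcal{S}|,|\mathcal{A}|$ and the bit-size of the input.

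For the average-reward case I would run the analogous argument with the gain/bias LP instead of the discounted LP: the unichain assumption lifts to the augmented MDP (which I would note is still unichain, since the first block of $\bar{\mathcal{S}}$ is just a deterministic readout of the current original state, so recurrence is inherited from $\mathcal{M}$), and the Bellman optimality equation $g^*+h^*(\xi)=\max_a\{\bar r(\xi,a)+\mathbb{E}_{\xi'}[h^*(\xi')]\}$ again collapses onto $|\mathcal{S}|$ bias unknowns $h(s'):=\mathbb{E}_{p'}[\bar h^*(s',p')]$ by the same independence-of-coordinates factorization; one then writes the standard average-reward LP over $(g,h)\in\mathbb{R}\times\mathbb{R}^{|\mathcal{S}|}$ with the expected-max right-hand side linearized as above.

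The main obstacle I anticipate is the second step: making the "expectation of a maximum of affine functions over a product distribution" genuinely \emph{polynomial-size} and \emph{exact} in an LP. Naively the max ranges over the exponentially many tuples $p'\in\mathcal{S}^{|\mathcal{A}|}$. The resolution — and the content of the "equivalent polynomial-size reformulation" promised after \eqref{eq:LP} — is that, because the coordinates are independent, $\mathbb{E}_{p'}[\max_a\{r(s',a)+\gamma w(p'_a)\}]$ depends on $w$ only through the marginal pushforward distributions $P_a(s',\cdot)$, and can be written via an ordering of the $O(|\mathcal{S}||\mathcal{A}|)$ candidate values together with inclusion–exclusion-free telescoping; care is needed to handle ties and to verify concavity/piecewise-linearity so that the LP relaxation is tight. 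Once this gadget is in place, everything else is routine MDP-LP bookkeeping.
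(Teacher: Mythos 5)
Your first step is exactly the paper's reduction (the ``tower rule'' collapsing the augmented value onto $|\mathcal{S}|$ unknowns $w(s)=\mathbb{E}_{p\sim\otimes_a P_a(s,\cdot)}[\bar v^*(s,p)]$, with the fixed point $w(s)=\mathbb{E}_p[\max_a\{r(s,a)+\gamma w(p_a)\}]$), and your observation that this expectation can be \emph{evaluated} for a fixed $w$ in time $O(\mathrm{poly}(SA))$ by sorting the $SA$ candidate values and telescoping products of marginals is also correct. However, the step where you claim a polynomial-\emph{size} LP has a genuine gap: you propose auxiliary variables $u(s',p')$ ``only for the polynomially many distinct successor tuples that actually arise,'' but the support of the product distribution $\otimes_a P_a(s',\cdot)$ is generically exponential in $|\mathcal{A}|$ (two possible successors per action already gives $2^{|\mathcal{A}|}$ tuples with positive probability), so this enumeration does not yield a polynomial-size program. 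Nor can you replace it by your ``closed form'': the function $F_s(w)=\mathbb{E}_p[\max_a\{r(s,a)+\gamma w(p_a)\}]$ is piecewise affine with one piece per \emph{ordering} of the $SA$ values $r(s,a)+\gamma w(s')$, i.e.\ up to $(SA)!$ pieces, and the telescoping coefficients depend on which ordering $w$ induces, so they are not fixed data of a single polynomial-size LP. (Minor but worth noting: $F_s$ is \emph{convex} in $w$, not concave --- an expectation of a max of affine functions is convex; the feasible set $\{w:\ w(s)\ge F_s(w)\ \forall s\}$ is still convex because the max sits on the right-hand side of the inequality.)

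The paper closes precisely this gap differently: it keeps the exponentially many linear constraints, one per ordering $m\in\mathcal{L}$ of the pairs $(s',a)$ (the sorting trick of Boutilier et al.), and then shows the LP is solvable in polynomial time via the \emph{ellipsoid method with a polynomial-time separation oracle}: given a candidate $v$, sort the pairs by $u_{v,s}(s',a)=r(s,a)+\gamma v(s')$, and check only the single tightest constraint (the one indexed by this ordering), whose coefficients $\bar P(p\in E_i^{m_{u_{v,s}}}\mid s)$ are products of marginals computable in $O((SA)^2)$ time. Your telescoping evaluation is essentially this oracle's computation, so your proposal can be repaired by replacing the claimed explicit polynomial-size reformulation with the separation-oracle-plus-ellipsoid argument (and the same fix applies verbatim to your average-reward sketch, whose reduction to the $(g,h)$ LP otherwise parallels the paper's).
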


Note that the proof is constructive: we explicitly encode the planning problem as a linear program whose feasible region captures the one-step look-ahead dynamics. Solving this LP yields the optimal value and an associated optimal policy in polynomial time.

\begin{proof}[Proof sketch (discounted case)]

We consider the discounted objective with transition look-ahead of depth $\ell=1$, where at time $t$ the agent observes the entire next-state vector
\[
p\in \Scal^{\Acal}
\]
before selecting an action. In this case, an augmented state is a pair $(s,p)\in \Scal\times \Scal^{\Acal}$, where $s$ is the current state and $p(a)$ is the next state that would be reached if action $a$ were played. For each $s\in\Scal$, define the distribution
\[
Q(\cdot\mid s)\in \Delta(\Scal^{\Acal})
\]
by
\[
Q(p\mid s)=\prod_{a\in\Acal}P(p(a)\mid s,a),
\qquad p\in\Scal^{\Acal}.
\]
Thus, conditional on the current state $s$, the one-step look-ahead vector $p$ is obtained by drawing independently one successor for each action. 

We show \Cref{lem:reduced-lp} that the optimal value function $\bar v^*$ of the augmented MDP can be expressed in terms of a value function $v^*:\Scal\to\mathbb R$ as
\[
\bar v^*(s,p)
=
\max_{a\in\Acal}\{r(s,a)+\gamma v^*(p(a))\},
\ \forall (s,p)\in \Scal\times\Scal^{\Acal},
\]
where $v^*$ is the unique solution of
\begin{equation}
v(s)
=
\EE_{p\sim Q(\cdot\mid s)}
\left[
\max_{a\in\Acal}\{r(s,a)+\gamma v(p(a))\}
\right],
\ \forall s\in\Scal.
\label{eq:sketch-reduced-bellman}
\end{equation}
Moreover, for any strictly positive measure $\mu$ on $\Scal$, this fixed-point equation is equivalently characterized by the linear program
\begin{align}
\min_{v:\Scal\to\mathbb R}\quad
&(1-\gamma)\sum_{s\in\Scal}\mu(s)\,v(s)
\label{eq:sketch-reduced-lp-obj}
\quad \text{s.t. } \forall s \in \Scal \quad
\\&v(s)\ge
\EE_{p\sim Q(\cdot\mid s)}
\left[
\max_{a\in\Acal}\{r(s,a)+\gamma v(p(a))\}
\right],
\label{eq:sketch-reduced-lp-cons}
\end{align}

The remaining difficulty is that the maximization operator appears inside the expectation in \eqref{eq:sketch-reduced-lp-cons}. Expanding the expectation over all $p\in\Scal^{\Acal}$ would lead to exponentially many terms. To handle this, we use the ellipsoid method together with a polynomial-time separation oracle.

Fix $s\in\Scal$, let
\[
u_{v,s}(s',a)= r(s,a)+\gamma v(s'),
\qquad (s',a)\in \Scal\times\Acal,
\]
and let $N = |\Scal||\Acal|$. Index the pairs $(s',a)$ as $\{(s^i,a^i)\}_{i=1}^N$. For any permutation $m$ of $\{1,\dots,N\}$, define the events

\begin{align*}
    &E_i^m =
\\& 
\left\{
p\in\Scal^{\Acal}
:\;
i=\min\bigl\{j\in[N]:\, p(a^{m(j)})=s^{m(j)}\bigr\}
\right\}.
\end{align*}

These events form a partition of $\Scal^{\Acal}$. Therefore,
\begin{align*}
&\EE_{p\sim Q(\cdot\mid s)}
\left[
\max_{a\in\Acal} u_{v,s}(p(a),a)
\right]
\\&=
\sum_{i=1}^N Q(E_i^m\mid s)\, u_{v,s}(s^{m(i)},a^{m(i)})
\end{align*}

whenever $m$ orders the pairs $(s',a)$ in decreasing order of $u_{v,s}(s',a)$. This yields an equivalent family of linear constraints, $\forall s\in\Scal,\ \forall m\in\mathcal L,$
\begin{equation}
v(s)\ge
\sum_{i=1}^N Q(E_i^m\mid s)\,
\big(r(s,a^{m(i)})+\gamma v(s^{m(i)})\big),
\label{eq:sketch-sorted-cons}
\end{equation}
where $\mathcal L$ denotes the set of all permutations of $\{1,\dots,N\}$.

We then show in \Cref{lemma:polynomial} that \eqref{eq:sketch-sorted-cons} admits a polynomial-time separation oracle. Given a candidate $v$, for each state $s$ it suffices to sort the pairs $(s',a)$ in decreasing order of $u_{v,s}(s',a)$, obtaining a permutation $m_{u_{v,s}}$, and to check only the corresponding constraint. The probabilities $Q(E_i^{m_{u_{v,s}}}\mid s)$ can be computed in polynomial time using the product structure of $Q(\cdot\mid s)$.

This yields a polynomial-time separation oracle for \eqref{eq:sketch-reduced-lp-obj}--\eqref{eq:sketch-reduced-lp-cons}. By the ellipsoid method, the reduced LP can therefore be solved in polynomial time. 
\end{proof}
The average-reward case follows by the same approach; full details are given in \Cref{sec:average}.

\section{Planning with two or more steps of transition look-ahead}
\label{sec:lge2}

We now show that allowing look-ahead of horizon $\ell \ge 2$ fundamentally changes the computational nature of planning. 

\Cref{th:LLookGamma} shows that for finite tabular MDPs, the $\ell$-step transition look-ahead planning problem is NP-hard for any $\ell\ge 2$ in the discounted setting.

\begin{restatable}[NP-hardness for $\ell\ge 2$ (discounted)]{theorem}{LLookGamma}
\label{th:LLookGamma}
 For any $\ell \geq 2$, $\ell$-\emph{DVDP} is NP-hard.
\end{restatable}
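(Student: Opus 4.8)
The plan is to reduce from a known NP-hard problem, and the natural candidate is \textsc{3-SAT} (or one of its variants such as \textsc{One-in-Three SAT} or \textsc{3-Dimensional Matching}), encoding a boolean formula into an MDP whose optimal value under $2$-step look-ahead crosses a threshold $\theta$ if and only if the formula is satisfiable. The intuition driving the construction is that $2$-step look-ahead lets the agent \emph{condition} its first action on the realized outcome of a \emph{stochastic} transition two steps ahead, and this conditioning power is exactly what creates the combinatorial explosion: choosing a first action is like choosing a value for a variable, but because look-ahead reveals future randomness, the planner can coordinate these choices in a way that mimics finding a satisfying assignment. Concretely, I would build a gadget for each variable $x_i$ with two actions corresponding to the literals $x_i$ and $\neg x_i$, and a gadget for each clause $C_j$ that is "visited" (in the look-ahead tree) by a random transition; the reward structure is designed so that a clause gadget pays out only if at least one of its literals was selected consistently, and the $2$-step tree is precisely what allows the clause-satisfaction check to be performed before committing.

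The key steps, in order, would be: (1) fix the source problem and its standard NP-hard form, stating the reduction target precisely; (2) describe the MDP $\mathcal{M}$ built from a formula $\phi$ — the state space (variable states, clause states, a sink, a start state), the action set (literals plus a default), the transition kernel (deterministic moves through variable gadgets, a single uniform random "branch" that selects which clause is audited, encoded so that it becomes visible exactly at look-ahead depth $2$), and the rewards (a large payoff concentrated at satisfied-clause states, zero elsewhere); (3) argue the forward direction: given a satisfying assignment, the stationary (deterministic) policy that plays the corresponding literal in each variable state achieves value at least $\theta$, since every audited clause is satisfied regardless of the random branch; (4) argue the converse: if some policy (even randomized, even exploiting the full $2$-step look-ahead) achieves value $\ge \theta$, then because the random branch is revealed only at depth $2$ and the threshold is calibrated so that \emph{every} clause must pay out, the policy's choices in the variable gadgets must induce a globally consistent literal selection, hence a satisfying assignment; (5) verify that the instance has size polynomial in $|\phi|$ and that $\gamma$, the rewards, and $\theta$ can be encoded in binary with polynomially many bits (e.g.\ $\gamma$ bounded away from $1$ by a constant, or chosen as $1-1/\mathrm{poly}$ if the gadget needs many steps), so the reduction is genuinely polynomial; (6) note that the discount factor must be chosen so that the per-clause reward gap dominates the geometric decay across the (constant-depth) gadget, which is why a fixed $\gamma \in (0,1)$ suffices and the look-ahead depth $\ell=2$, not $\ell$ growing with the input, is what matters.

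The main obstacle I anticipate is the \textbf{converse direction}, specifically ruling out that a cleverly randomized policy exploiting look-ahead can "cheat" — e.g.\ by playing differently in a variable state depending on which clause it sees will be audited two steps later, thereby satisfying every individual audited clause without any single assignment satisfying all of them. Preventing this requires a careful gadget design: the variable gadgets must be traversed \emph{before} the auditing branch is realized (so the look-ahead at the decision point does not yet reveal the branch), or alternatively the same literal-choice must be forced to be reused across many audited clauses so that inconsistency is penalized. This is precisely where $\ell=2$ versus $\ell=1$ becomes delicate: with $\ell=1$ the agent can only see one deterministic step and the LP of \Cref{th:1LookGamma} applies, whereas with $\ell=2$ the agent sees one layer of branching, and the construction must exploit exactly that one extra layer while ensuring it does not accidentally give the agent \emph{too much} foresight. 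A secondary, more technical obstacle is handling randomized policies cleanly; I would address this either by a convexity/extreme-point argument showing a deterministic policy is optimal in the augmented MDP, or by structuring the reward so that any value-$\ge\theta$ policy must be (essentially) deterministic on the variable gadgets. Finally, I would remark that the same construction, with the reward relocated to a recurrent structure, yields NP-hardness of $\ell$-ARDP as well, so the discounted statement is not special — though the theorem as stated only asks for the discounted case.
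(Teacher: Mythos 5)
There is a genuine gap here, and it is exactly at the point you flag as the ``main obstacle'': the converse direction of your SAT reduction cannot be repaired by either of the fixes you suggest. The core difficulty is the Markov property. If the variable gadgets are distinct states and the assignment is encoded by the actions taken while traversing them, then a polynomial-size MDP has no way to remember those choices when the randomly-audited clause state is reached; a Markovian reward at the clause state cannot depend on which literals were played earlier in the trajectory, so the audit cannot check consistency with the assignment (even your forward direction quietly assumes it can). The only way to make the audit history-dependent is to fold the assignment into the state, which blows the state space up exponentially and kills the polynomiality of the reduction. Your first fix --- traverse the variable gadgets \emph{before} the branch is revealed --- makes things worse: if the look-ahead carries no useful information at the decision points that matter, the optimal value coincides with ordinary discounted planning, which is polynomial-time (P-complete), so no NP-hardness can arise there; all hardness in this setting must come specifically from how the policy exploits the exponentially large look-ahead observation. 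Your second fix --- reveal the branch so the same literal choice is ``reused'' across clauses --- reintroduces precisely the cheating you wanted to exclude, since the augmented-state policy may legally condition on the revealed branch and satisfy each audited clause with a branch-dependent, globally inconsistent choice. So the proposal is missing the key mechanism, not just the details.

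The paper's proof uses a structurally different source of hardness that sidesteps the memory issue entirely: it reduces from independent set in $3$-regular graphs via the Largest Expected Value problem of Mehta et al. (deciding whether $\max_{|Y|=k}\mathbb{E}[\max_{i\in Y}X_i]\ge C$). The constructed MDP has a waiting state $s_0$ where playing $a_1$ resamples the $2$-step look-ahead, which reveals a random subset $Y$ of vertex states reachable through $s_1$; the agent's only real decision is an optimal-stopping one --- wait for a better subset or commit and collect $\gamma^{\tau}\,\mathbb{E}[\max_{v\in Y_\tau}X_v]$ --- and the rewards $X_v$ are engineered (edge-indexed heavy-tailed values, equal means) so that distinguishing whether some size-$k$ subset, equivalently an independent set, has high $\mathbb{E}[\max]$ is NP-hard. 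Nothing in the trajectory needs to be remembered; the combinatorial explosion lives in the exponentially many possible look-ahead realizations at a single state, which is the resource your construction would also need to exploit but currently does not.
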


The discounted case serves as the cornerstone of our argument.  We establish NP-hardness for $\ell=2$, which extends to all larger look-ahead horizons. 

\begin{proof}
Given random variables $X_1, \dots, X_n$, an integer $k$ and a threshold $C$, the Largest Expected Value problem consists in deciding whether
\begin{equation}
    \max_{Y \subset [n] : |Y|=k} \EE[\max_{i \in Y} X_i] \geq C.
    \label{eq:mehta}
\end{equation}
Largest Expected Value has been shown to be NP-hard~\citep{mehta2020hittinghighnotessubset}. The key idea is to connect to this problem by constructing an MDP instance where computing the optimal policy requires solving a closely related subset-selection problem. More precisely, our proof relies on the gap construction of~\citet{mehta2020hittinghighnotessubset}, which reduces \textsc{Independent Set} on regular graphs to such a stochastic optimization problem.

\paragraph{}
Let $G=(V,E)$ be an undirected $3$-regular graph. Using the random variables $(X_v)_{v\in V}$ provided by the reduction of~\citet{mehta2020hittinghighnotessubset}, we construct an MDP $\mathcal{M}_G=(\Scal,\Acal,P,r,\gamma)$ whose structure is summarized in \Cref{fig:beg-three} and described in detail in \Cref{sec:proofLLookGamma}. The state space contains a root state $s_0$, a selector state $s_1$, vertex states $(s_v)_{v\in V}$, support states encoding the realizations of the random variables $(X_v)_{v\in V}$, and an absorbing terminal state $s_T$. From $s_0$, action $\textsf{wait}$ loops back to $s_0$, while action $\textsf{go}$ moves deterministically to $s_1$. From $s_1$, the actions $\textsf{pick}_1,\dots,\textsf{pick}_k$ reveal candidate vertex states, and from a vertex state $s_v$, action $\textsf{claim}$ samples the corresponding random payoff $X_v$. Rewards are only collected on the support states before reaching $s_T$.

\paragraph{}
With $\ell=2$ look-ahead, an agent at $s_0$ observes the candidate vertices that would become available after committing through $\textsf{go}$ and then choosing one of the selector actions. More precisely, each augmented root state $\xi=(s_0,p_1,p_2)$ reveals a $k$-tuple of candidate vertices, which induces a subset $S_V(\xi)\subseteq V$. If the agent plays $\textsf{wait}$, it remains in $s_0$ and resamples the two-step look-ahead, thereby drawing a fresh candidate set. If it instead commits through $\textsf{go}$, it transitions to $s_1$ and then chooses among the currently revealed candidates.

Let $X_v$ denote the random payoff obtained by claiming at vertex state $s_v$. The key structural result (\Cref{lem:root-recursion}) shows that for every augmented root state $\xi$,
\[
\bar v^\star(\xi)
=
\max\left\{
\gamma^3\,\EE\!\left[\max_{v\in S_V(\xi)} X_v\right],
\;
\gamma\,\EE_{\xi'\sim \bar P_{\textsf{wait}}(\cdot\mid \xi)}[\bar v^\star(\xi')]
\right\}.
\]
Thus, at the root, the agent trades off immediate commitment to the currently revealed subset against waiting for a better one.

\paragraph{}
In the \textsc{NO} case, every subset of size at most $k$ has expected maximum at most $k\mu-1$, which yields a uniform upper bound on the root value. In the \textsc{YES} case, there exists a  subset $S^\star$ of size $k$ such that waiting until the look-ahead reveals exactly $S^\star$ and then committing yields a strictly larger value. Choosing $\gamma<1$ sufficiently close to $1$ separates the two cases and therefore gives a polynomial-time reduction from \textsc{Independent Set} to $2$-\emph{DVDP}. This proves NP-hardness.
\end{proof}

\begin{figure}
    \centering
    \includegraphics[width=1\linewidth]{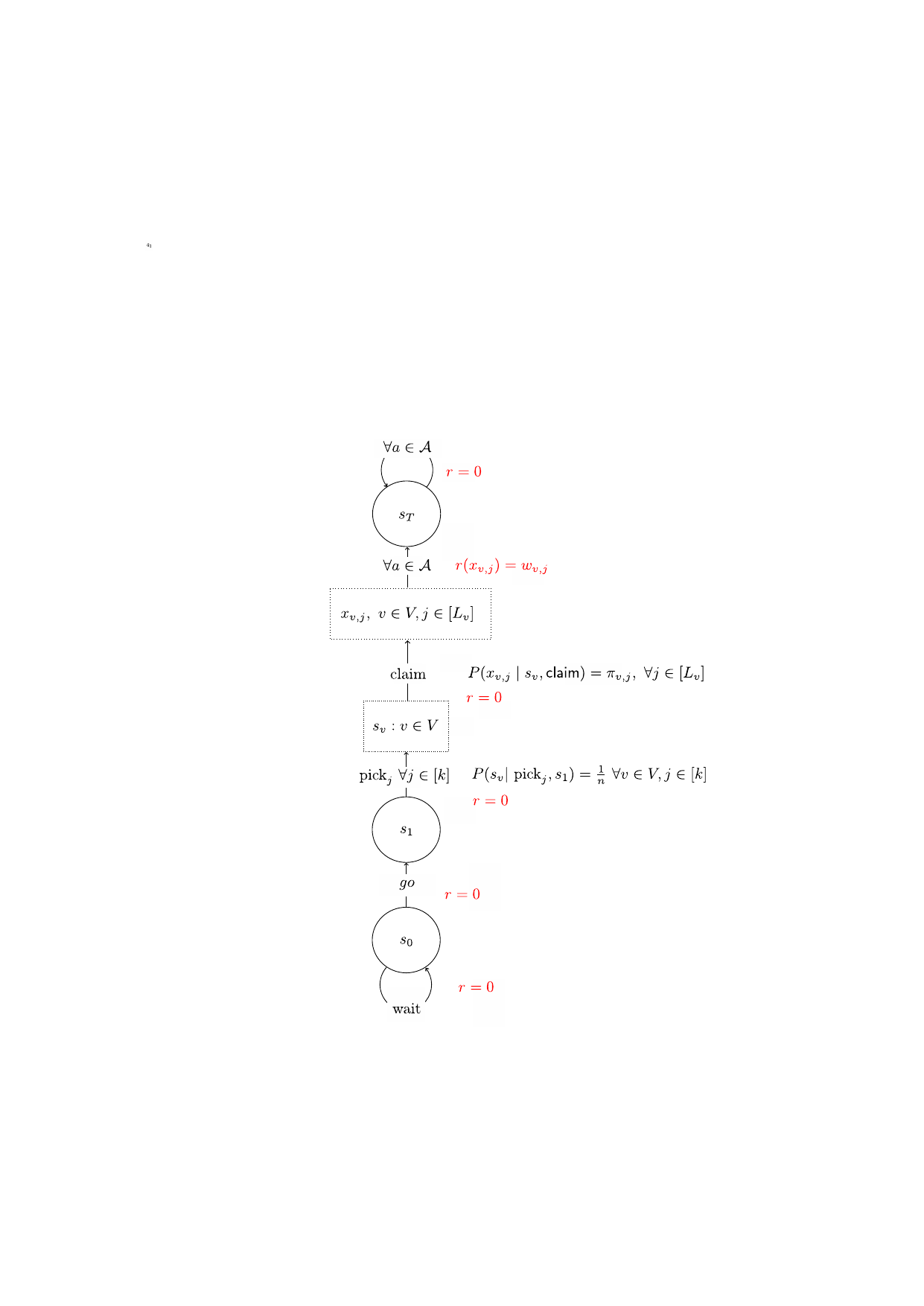}
    \caption{\textbf{Hardness of $2$-\emph{DVDP}.} At time $t=0$, the agent is at the root state $s_0$ and observes, through depth-$2$ look-ahead, the candidate vertex states that would become available after playing $\textsf{go}$ and then one of the selector actions $\textsf{pick}_1,\dots,\textsf{pick}_k$. At $s_0$, it may either play $\textsf{wait}$ to remain in $s_0$ and obtain a fresh look-ahead, or play $\textsf{go}$ to transition to $s_1$. Once in $s_1$, the agent chooses among the currently revealed candidates; if it then moves to a vertex state $s_v$, the subsequent payoff is governed by the random variable $X_v$. Thus, if the agent commits at time $\tau$, the value of the revealed look-ahead is of the form $\EE[\max_{v\in S_V(\xi_\tau)} X_v]$. Optimizing the time at which the agent commits at $s_0$ is then shown to be essentially as hard as the stochastic subset-selection problem underlying the reduction of~\citet{mehta2020hittinghighnotessubset}.}
    \label{fig:beg-three}
\end{figure}

We then show in \Cref{th:LLookAverage} that hardness also extends to the average–reward criterion. 

\begin{restatable}[NP-hardness for $\ell \ge 2$ (average reward)]{theorem}{LLookAverage}
\label{th:LLookAverage}
 For any $\ell \geq 2$, $\ell$-\emph{ARDP} is NP-hard.
\end{restatable}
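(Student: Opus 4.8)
The plan is to reduce from the discounted problem $\ell$-DVDP, whose NP-hardness was just established in Theorem \ref{th:LLookGamma}, by a standard "discounting-via-resets" construction adapted to handle look-ahead. First I would recall the classical trick (going back to \citet{puterman2014markov}) that turns a discounted MDP with factor $\gamma$ into an average-reward MDP: from every state, with probability $1-\gamma$ the process is teleported back to the initial state $s_0$ (independently of the action), and with probability $\gamma$ it follows the original transition; the long-run average reward of the resulting unichain MDP is then proportional to $(1-\gamma)$ times the discounted value $v^\pi_\gamma(s_0)$. Concretely, given an instance $(\mathcal{M}, s_0, \gamma, \theta)$ of $\ell$-DVDP coming from the hardness construction of Theorem \ref{th:LLookGamma}, I would build $\mathcal{M}'$ on the same state space (plus possibly a bookkeeping state) with transition kernel $P'_a(s,\cdot) = \gamma P_a(s,\cdot) + (1-\gamma)\,\delta_{s_0}$ and the same reward function, and set the average-reward threshold to $\theta' = (1-\gamma)\theta$. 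Since $\gamma$ in the reduction of Theorem \ref{th:LLookGamma} is encoded in binary and is bounded away from $0$ and $1$ by a polynomial margin, $\theta'$ has polynomial bit-length, so the reduction is polynomial-time.

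The substantive point — and the main obstacle — is that the equivalence must hold \emph{with $\ell$-step look-ahead}, i.e. I must argue that the $\ell$-step transition tree observed by the agent in $\mathcal{M}'$ carries exactly the information needed to simulate optimal play in $\mathcal{M}$, and no more. The delicate part is that the reset edges inject $s_0$ into the look-ahead tree: at depth $k$ the agent now sees, for each action sequence, either the genuine $\Pi[s_t,k,\bar a_k]$ branch (with probability $\gamma^k$) or a branch that has already been reset to $s_0$. I would show that this does not help the agent: conditioned on no reset having occurred within the next $\ell$ steps, the look-ahead tree in $\mathcal{M}'$ is distributed exactly as in $\mathcal{M}$; conditioned on a reset, the revealed future is the deterministic value "$s_0$", which the agent already knew was a possibility and which contributes reward governed by the same (recursively optimal) continuation value. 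Hence the augmented-MDP value functions of $\mathcal{M}'$ and $\mathcal{M}$ are related by the same affine identity as in the look-ahead-free case, and an optimal stationary policy for one yields an optimal stationary policy for the other. A clean way to package this is to invoke the augmented-MDP formalism of Section \ref{subsec:augmented}: applying the reset construction to $\bar{\mathcal{M}}$ directly, one checks that $\overline{(\mathcal{M}')} $ and the reset of $\bar{\mathcal{M}}$ coincide up to the identification of augmented states, so the look-ahead-free reduction applies verbatim to the augmented chains.

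I would then verify the two remaining housekeeping items. First, unichain-ness (Assumption \ref{ass:unichain}): because every state transitions to $s_0$ with probability $1-\gamma>0$ under every action, every policy induces a Markov chain in which $s_0$ is reached from everywhere in one step with positive probability, so all recurrent classes contain $s_0$ and the chain is unichain — the average reward is well-defined and state-independent, matching Definition \ref{def:ARDP-body}. Second, the value bookkeeping: I would state the identity $g^{\pi}(\mathcal{M}',\ell) = (1-\gamma)\,v^{\pi}_\gamma(s_0;\mathcal{M},\ell)$ for every stationary policy $\pi$ (proved by writing $g^\pi$ as the reward collected between successive resets, a geometric sum with ratio $\gamma$), which immediately gives the "yes"-instance equivalence $g^\pi \ge \theta' \iff v^\pi_\gamma(s_0) \ge \theta$ and completes the reduction. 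Finally, since Theorem \ref{th:LLookGamma} gives hardness already at $\ell=2$ and the construction is insensitive to $\ell$, the same argument yields NP-hardness of $\ell$-ARDP for every $\ell \ge 2$.
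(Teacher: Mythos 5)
Your high-level strategy is the same as the paper's: reduce from the discounted hard instance of \Cref{th:LLookGamma} by adding an independent $\mathrm{Bernoulli}(1-\gamma)$ reset to $s_0$, verify the unichain property, use a renewal--reward argument to relate the gain to $(1-\gamma)\,v^\pi_\gamma(s_0)$, and set the threshold to $(1-\gamma)\theta$. The unichain check, the renewal identity, and the threshold bookkeeping in your last paragraph all match the paper. The difference — and it is a genuine gap — sits exactly at the point you yourself flag as ``the main obstacle.'' You place the reset at the level of the \emph{base} MDP, $P'_a(s,\cdot)=\gamma P_a(s,\cdot)+(1-\gamma)\delta_{s_0}$, and then assert that the augmentation of $\mathcal{M}'$ coincides with the reset of $\bar{\mathcal{M}}$. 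That identification is false: in $\overline{\mathcal{M}'}$ the $\ell$-step look-ahead is computed under the \emph{reset} dynamics, so the agent sees, up to $\ell$ steps in advance, which branches of its transition tree have already been killed and sent to $s_0$; in the reset of the augmented MDP — which is what the paper actually uses, see \eqref{eq:reset-kernel-2look} — resets are never announced, and the look-ahead is resampled from $\Lambda_{s_0}$ only when the reset occurs. The two processes have different information structures, so your claimed identity $g^{\pi}(\mathcal{M}',\ell)=(1-\gamma)\,v^{\pi}_\gamma(s_0;\mathcal{M},\ell)$ ``for every stationary policy $\pi$'' does not even typecheck (the policies act on different observation spaces), and the soundness direction of the reduction is left unproven.

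The conditioning argument you sketch (``conditioned on no reset within $\ell$ steps the tree is distributed as in $\mathcal{M}$; conditioned on a reset the revealed future is $s_0$'') does not close this gap: the issue is not the marginal distribution of the tree but the fact that the agent can \emph{react} to foreknowledge of the kill time. In general, an agent that learns $\ell$ steps in advance that the episode is about to end can switch to myopic behavior and collect strictly more expected reward per cycle than the $\gamma$-discounted value, so $\sup_{\pi'} g^{\pi'}(\mathcal{M}',\ell)$ need not equal $(1-\gamma)\sup_{\pi} v^{\pi}_\gamma(s_0;\mathcal{M},\ell)$; the ``only if'' direction of your equivalence could fail. To repair the base-level construction you would have to re-do the soundness analysis of \Cref{sec:proofLLookGamma} for the reset-contaminated instance and show that reset foreknowledge cannot help in $\mathcal{M}_G$ (plausible there, since all rewards require a three-step commitment, but it must be proved), or else follow the paper and perform the reset directly on the augmented MDP $\bar{\mathcal{M}}_G$, where resets are unobservable in advance and the renewal--reward identity holds policy-by-policy.
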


\begin{proof}[Proof sketch]

We prove NP-hardness of the average–reward case by a reduction from the discounted setting (\Cref{th:LLookGamma}). Let $\bar{\mathcal M}_G$ be the augmented MDP used to prove \Cref{th:LLookGamma}. We modify $\bar{\mathcal M}_G$ by adding an independent Bernoulli ``reset'' coin at each step: with probability $1-\gamma$, the process resets to a fresh root augmented state distributed according to the look-ahead law at $s_0$, and with probability $\gamma$ it follows the original transition kernel $\bar P$. Rewards are left unchanged. Thus the dynamics of the modified MDP $\bar{\mathcal M}'_G$ are:
\begin{align*}
    \bar P'_a(\xi',\xi)
    &=
    \gamma\,\bar P_a(\xi',\xi)
    +
    (1-\gamma)\,\Lambda_{s_0}(\xi'),
\\
    \bar r'(\xi,a)&=\bar r(\xi,a).
\end{align*}

Because the reset coin is tossed independently at each step, the trajectory of $\bar{\mathcal M}'_G$ naturally decomposes into i.i.d.\ cycles between successive resets. A cycle has expected length $\tfrac{1}{1-\gamma}$, and the expected cumulative reward of a cycle under any stationary policy $\pi$ coincides with the discounted return in $\bar{\mathcal M}_G$. By the Renewal--Reward Theorem, the long-run average reward in $\bar{\mathcal M}'_G$ satisfies
\begin{equation*}
    g^\pi(\bar{\mathcal M}'_G)
    \;=\;
    (1-\gamma)\,\bar v^\pi_\gamma(\xi_0;\bar{\mathcal M}_G).
\end{equation*}

Therefore, given a threshold $\theta$ in the discounted instance, we define
\begin{equation*}
    \kappa \triangleq (1-\gamma)\theta.
\end{equation*}
Then
\begin{equation*}
    \bar v^\pi_\gamma(\xi_0;\bar{\mathcal M}_G)\ge \theta
    \;\;\Longleftrightarrow\;\;
    g^\pi(\bar{\mathcal M}'_G) \ge \kappa.
\end{equation*}
Hence, deciding whether there exists a policy exceeding $\theta$ in the discounted setting is equivalent to deciding whether there exists a policy exceeding $\kappa$ in the average–reward setting. The detailed proof is provided in Appendix~\ref{sec:ProofLLookAverage}.
\end{proof}

\section{Conclusion and future work}

This work identifies a sharp computational frontier for planning with transition look-ahead. By introducing canonical decision formulations for both the discounted and average-reward criteria, we establish that planning is tractable in polynomial time for one-step look-ahead ($\ell=1$), with explicit linear programming formulations (\Cref{th:1LookGamma}), whereas for $\ell \ge 2$ the problem becomes NP-hard under both criteria (\Cref{th:LLookGamma,th:LLookAverage}). This dichotomy mirrors classical complexity thresholds such as the jump from $2$--SAT to $3$--SAT, and shows that while deeper look-ahead enriches the agent’s information, it simultaneously induces a combinatorial explosion that makes exact planning computationally intractable.

\paragraph{} Note that our NP-hardness results do not imply that planning with $\ell \ge 2$ is unsolvable, but rather that exact solutions cannot be expected in full generality. This motivates several directions for future work. On the approximation side, it remains open whether polynomial-time approximation schemes (PTAS) exist for discounted $\ell$–look-ahead planning, or conversely, whether even constant-factor approximation is impossible. On the structural side, one may ask under which restrictions hardness disappears: our reduction relies on constructing a worst-case instance that crucially uses dense and irregular transition structures. Hence, tractability may be recovered when the MDP satisfies additional structure, such as factored dynamics, sparse transition graphs, or monotone rewards (i.e., rewards that are non-decreasing along the natural partial order induced by the state space, which precludes the oscillatory patterns needed by our reduction). Similarly, when the discount factor $\gamma$ is sufficiently small—for instance, $\gamma < 0.5$, which dampens long-term dependencies—the reduction no longer applies, suggesting that hardness may vanish. Another direction is to study the complexity of near-optimal solutions under restricted policy classes, e.g., policies constrained by structural priors such as monotonicity or threshold rules.  Beyond exact look-ahead, a natural extension is to allow noisy or costly predictions, and to analyze how robustness and budget constraints interact with hardness. Our results also bear on learning: they confirm that model-predictive-control (MPC) style strategies—which rely on short-horizon roll-outs and commit to open-loop action sequences—are inherently suboptimal in the tabular case. This raises a broader algorithmic question: how can one design learning procedures that remain computationally efficient while approximating the (generally intractable) optimal planner with deeper look-ahead?

\ackaccepted{
This research was supported by the Israel Science Foundation (ISF, Grant No.~4118/25) and the Maimonides Fund’s Future Scientists Center. The authors thank Simon Mauras for insightful discussions notably on LP reduction. We also thank the anonymous reviewers for their valuable feedback and constructive comments.}

\FloatBarrier
\bibliography{biblio}
\bibliographystyle{plainnat}

\appendix
\onecolumn

\theoremstyle{plain}
\newtheorem{unlemma}{Lemma S}
\newtheorem{unproposition}{Proposition S}
\newtheorem{uncorollary}{Corollary S}
\newtheorem{untheorem}{Theorem S}

\setcounter{equation}{0}
\setcounter{figure}{0}
\setcounter{table}{0}
\setcounter{page}{1}
\makeatletter
\renewcommand{\theequation}{S\arabic{equation}}
\renewcommand{\thefigure}{S\arabic{figure}}
\renewcommand{\thetable}{S\arabic{table}}
\renewcommand{\thetheorem}{S\arabic{theorem}}
\renewcommand{\thelemma}{S\arabic{lemma}}
\renewcommand{\thesection}{S\arabic{section}}
\renewcommand{\theremark}{S\arabic{remark}}
\renewcommand{\theproposition}{S\arabic{proposition}}
\renewcommand{\thecorollary}{S\arabic{corollary}}

\clearpage

\section{Transition dynamics}
\label{sec:transition}

In this section, we make explicit the transition kernel of the augmented MDP
\[
\bar{\mathcal M} = (\bar{\mathcal S}, \mathcal A, \bar P, \bar r)
\]
introduced in \cref{subsec:augmented}.

Recall that, at time $t$, the augmented state $\xi_t \in \bar{\mathcal S}$ encodes the full $\ell$-step look-ahead tree available to the agent from the current state $s_t$. More precisely, for each depth $k \in [\ell]$, the component
\[
\xi_t[k] : \mathcal A^k \to \mathcal S
\]
maps every action sequence $a_{1:k} \in \mathcal A^k$ to the state reached after applying that sequence from $s_t$. We write
\[
\xi_t[k](a_{1:k}) = \Pi[s_t,k,a_{1:k}],
\]
where $\Pi[s,k,a_{1:k}]$ denotes the random state obtained by starting from $s$ and applying the action sequence $a_{1:k}$ for $k$ steps.

The augmented transition kernel is defined by
\[
\bar P_a(\xi,\xi')
\triangleq
\mathbb P(\xi_{t+1}=\xi' \mid \xi_t=\xi,\ a_t=a),
\qquad
\xi,\xi' \in \bar{\mathcal S},\ a\in\mathcal A.
\]

The key observation is that, after playing action $a$ at time $t$, the first $\ell-1$ levels of the new look-ahead tree are fully determined by the previous one. Indeed, for every $k\in\{1,\dots,\ell-1\}$ and every $a_{1:k}\in\mathcal A^k$,
\[
\xi_{t+1}[k](a_{1:k})
=
\xi_t[k+1](a,a_{1:k}).
\]
This follows from the recursion property
\[
\Pi[\Pi[s,1,a],k,a_{1:k}]
=
\Pi[s,k+1,(a,a_{1:k})].
\]

Hence, conditional on $(\xi_t=\xi,a_t=a)$, the components
$\xi_{t+1}[1],\dots,\xi_{t+1}[\ell-1]$
are deterministic, and only the last block $\xi_{t+1}[\ell]$ remains random.

It follows that
\[
\bar P_a(\xi,\xi')
=
\mathbb 1\!\left\{
\forall k\in\{1,\dots,\ell-1\},\ 
\forall a_{1:k}\in\mathcal A^k,\ 
\xi'[k](a_{1:k})=\xi[k+1](a,a_{1:k})
\right\}
\cdot
\mathbb P\!\left(\xi_{t+1}[\ell]=\xi'[\ell]\mid \xi_t=\xi,\ a_t=a\right).
\]
We now characterize the second factor.

Fix $\xi\in\bar{\mathcal S}$ and $a\in\mathcal A$. For each prefix
$a_{1:\ell-1}\in\mathcal A^{\ell-1}$, define
\[
s_\xi(a_{1:\ell-1}) \triangleq \xi[\ell](a,a_{1:\ell-1}).
\]
This is the state reached after first taking action $a$, and then following the continuation $a_{1:\ell-1}$ inside the current look-ahead tree.

Then, by the Markov property, for every terminal action $a_\ell\in\mathcal A$,
\[
\xi_{t+1}[\ell](a_{1:\ell})
\sim
P(\cdot \mid s_\xi(a_{1:\ell-1}), a_\ell).
\]
Therefore, the law of the last block is exactly the joint law of all one-step successors from the frontier states of the current tree. Equivalently,
\begin{align}
&\mathbb P\!\left(\xi_{t+1}[\ell]=\xi'[\ell]\mid \xi_t=\xi,\ a_t=a\right)
\nonumber\\
&\qquad =
\mathbb P\!\left(
\forall a_{1:\ell}\in\mathcal A^\ell,\ 
\Pi\!\left[s_\xi(a_{1:\ell-1}),1,a_\ell\right]
=
\xi'[\ell](a_{1:\ell})
\right).
\label{eq:last-block-law}
\end{align}

The structure of the last block is constrained by a consistency requirement: multiple indices correspond to the same underlying random variable. 

More precisely, for any $s\in\Scal$, all prefixes $\bar a_{\ell-1}\in\Acal^{\ell-1}$ such that
\[
\xi[\ell](a,\bar a_{\ell-1})=s
\]
lead to the same frontier state $s$, and therefore share the same one-step successor $\Pi[s,1,a']$ for any $a'\in\Acal$. 

This induces the following partition of $\Acal^{\ell-1}$:
\[
\Scal_{a}^{\ell-1}(\xi)
=
\left\{
s \in \Scal : \exists \bar a_{\ell-1} \in \Acal^{\ell-1},\ 
\xi[\ell](a,\bar a_{\ell-1})=s
\right\},
\]
\[
\mathcal{A}^{\ell-1}_{s,a}(\xi)
=
\left\{
\bar a_{\ell-1} \in \Acal^{\ell-1} : 
\xi[\ell](a,\bar a_{\ell-1})=s
\right\}.
\]

Using this partition, we can rewrite
\begin{align}
&\mathbb P\!\left(\xi_{t+1}[\ell]=\xi'[\ell]\mid \xi_t=\xi,\ a_t=a\right)
\nonumber\\
&\qquad =
\mathbb P\!\left(
\forall s\in\Scal_{a}^{\ell-1}(\xi),\ 
\forall \bar a_{\ell-1}\in\mathcal{A}^{\ell-1}_{s,a}(\xi),\ 
\forall a'\in\Acal,\ 
\Pi[s,1,a']=\xi'[\ell](\bar a_{\ell-1},a')
\right).
\label{eq:last-block-grouped}
\end{align}

For a fixed pair $(s,a')$, all indices $(\bar a_{\ell-1},a')$ with 
$\bar a_{\ell-1}\in\mathcal{A}^{\ell-1}_{s,a}(\xi)$ correspond to the same random variable $\Pi[s,1,a']$. Hence, the probability in \eqref{eq:last-block-grouped} is zero unless, for every $s\in\Scal_{a}^{\ell-1}(\xi)$ and every $a'\in\Acal$, the value
\[
\xi'[\ell](\bar a_{\ell-1},a')
\]
is constant over all $\bar a_{\ell-1}\in\mathcal{A}^{\ell-1}_{s,a}(\xi)$.

When this consistency condition holds, the probability factorizes over distinct pairs $(s,a')$, yielding
\begin{align}
&\mathbb P\!\left(\xi_{t+1}[\ell]=\xi'[\ell]\mid \xi_t=\xi,\ a_t=a\right)
\nonumber\\
&\qquad =
\prod_{s\in\Scal_{a}^{\ell-1}(\xi)}
\prod_{a'\in\Acal}
\left(
\sum_{s'\in\Scal}
P(s'\mid s,a')
\,
\mathbb 1\!\left\{
\forall \bar a_{\ell-1}\in\mathcal{A}^{\ell-1}_{s,a}(\xi),\ 
\xi'[\ell](\bar a_{\ell-1},a')=s'
\right\}
\right).
\label{eq:last-block-factorized}
\end{align}

Combining this expression with the deterministic shift of the first $\ell-1$ blocks, we obtain the augmented transition kernel
\begin{align}
\bar P_a(\xi,\xi')
&=
\mathbb 1\!\left\{
\forall k\in\{1,\dots,\ell-1\},\ 
\forall a_{1:k}\in\Acal^k,\ 
\xi'[k](a_{1:k})=\xi[k+1](a,a_{1:k})
\right\}
\nonumber\\
&\qquad \times
\prod_{s\in\Scal_{a}^{\ell-1}(\xi)}
\prod_{a'\in\Acal}
\left(
\sum_{s'\in\Scal}
P(s'\mid s,a')
\,
\mathbb 1\!\left\{
\forall \bar a_{\ell-1}\in\mathcal{A}^{\ell-1}_{s,a}(\xi),\ 
\xi'[\ell](\bar a_{\ell-1},a')=s'
\right\}
\right).
\label{eq:augmented-kernel}
\end{align}

This expression highlights the two components of the augmented dynamics: a deterministic shift of the previously revealed look-ahead tree, and a stochastic completion of the new frontier governed by the original transition kernel.

\section{Proof of \cref{th:1LookGamma}}
\subsection{Discounted case}
\label{sec:discounted-case}

We now show that planning with one-step transition look-ahead under discounted reward can be solved in polynomial time. The proof proceeds in two steps. First, we reduce the Bellman optimality equations of the augmented MDP to a linear program over the original state space. Second, we show that this reduced linear program admits a polynomial-time separation oracle, which yields polynomial-time solvability via the ellipsoid method.

In the one-step look-ahead case, an augmented state is a pair $(s,p)\in \Scal\times \Scal^{\Acal}$, where $s$ is the current state and
\[
p:\Acal\to\Scal
\]
maps each action $a\in\Acal$ to the next state that would be reached if action $a$ were played. For each $s\in\Scal$, define the distribution
\[
Q(\cdot\mid s)\in \Delta(\Scal^{\Acal})
\]
by
\begin{equation}
Q(p\mid s)=\prod_{a\in\Acal} P(p(a)\mid s,a),
\qquad p\in\Scal^{\Acal}.
\label{eq:def-Q}
\end{equation}
Thus, conditional on the current state $s$, the one-step look-ahead vector $p$ is obtained by drawing independently one successor for each action.

Let $\bar v^*$ denote the optimal value function of the augmented MDP. The next lemma shows that $\bar v^*$ can be expressed in terms of a value function $v^*$ defined on the original state space only.

\begin{lemma}
\label{lem:reduced-lp}
There exists a function $v^*:\Scal\to\mathbb R$ such that
\begin{equation}
\bar v^*(s,p)
=
\max_{a\in\Acal}\left\{r(s,a)+\gamma\, v^*(p(a))\right\},
\qquad \forall (s,p)\in \Scal\times\Scal^{\Acal},
\label{eq:barv-from-v}
\end{equation}
and $v^*$ is the unique solution of
\begin{equation}
v(s)
=
\EE_{p\sim Q(\cdot\mid s)}
\left[
\max_{a\in\Acal}\{r(s,a)+\gamma\,v(p(a))\}
\right],
\qquad \forall s\in\Scal.
\label{eq:reduced-bellman}
\end{equation}

Moreover, for any strictly positive measure $\mu$ on $\Scal$, $v^*$ is equivalently the unique optimal solution of the linear program
\begin{align}
\min_{v:\Scal\to\mathbb R}\quad
&(1-\gamma)\sum_{s\in\Scal}\mu(s)\,v(s)
\label{eq:reduced-lp-obj}
\\
\text{s.t.}\quad
&v(s)\ge
\EE_{p\sim Q(\cdot\mid s)}
\left[
\max_{a\in\Acal}\{r(s,a)+\gamma\,v(p(a))\}
\right],
\qquad \forall s\in\Scal.
\label{eq:reduced-lp-cons}
\end{align}
\end{lemma}

\begin{proof}
We start from the Bellman optimality equation in the augmented MDP. Since the augmented state is $(s,p)\in\Scal\times\Scal^{\Acal}$, we have
\begin{equation}
\bar v^*(s,p)
=
\max_{a\in\Acal}
\left\{
r(s,a)
+
\gamma\,
\EE_{p'\sim Q(\cdot\mid p(a))}
\big[\bar v^*(p(a),p')\big]
\right\}.
\label{eq:augmented-bellman}
\end{equation}

Define
\begin{equation}
v^*(s)
\triangleq
\EE_{p\sim Q(\cdot\mid s)}[\bar v^*(s,p)],
\qquad \forall s\in\Scal.
\label{eq:def-vstar-reduced}
\end{equation}
Then \eqref{eq:augmented-bellman} immediately yields
\begin{equation}
\bar v^*(s,p)
=
\max_{a\in\Acal}
\left\{
r(s,a)+\gamma\,v^*(p(a))
\right\},
\qquad \forall (s,p)\in\Scal\times\Scal^{\Acal},
\end{equation}
which is exactly \eqref{eq:barv-from-v}. Taking expectation with respect to $p\sim Q(\cdot\mid s)$ on both sides gives
\[
v^*(s)
=
\EE_{p\sim Q(\cdot\mid s)}
\left[
\max_{a\in\Acal}\{r(s,a)+\gamma\,v^*(p(a))\}
\right],
\qquad \forall s\in\Scal,
\]
that is, \eqref{eq:reduced-bellman}.

Since the Bellman operator
\[
(Tv)(s)
\triangleq
\EE_{p\sim Q(\cdot\mid s)}
\left[
\max_{a\in\Acal}\{r(s,a)+\gamma\,v(p(a))\}
\right]
\]
is monotone and a $\gamma$-contraction in the sup norm, \eqref{eq:reduced-bellman} admits a unique solution. Therefore the function $v^*$ defined by \eqref{eq:def-vstar-reduced} is uniquely characterized by \eqref{eq:reduced-bellman}.

We now show that \eqref{eq:reduced-bellman} is equivalently characterized by the linear program \eqref{eq:reduced-lp-obj}--\eqref{eq:reduced-lp-cons}. The argument is based on an exact correspondence between the standard LP on the augmented state space and the reduced LP above.

Consider the standard discounted LP on the augmented MDP:
\begin{align}
\min_{\bar v:\Scal\times\Scal^{\Acal}\to\mathbb R}\quad
&(1-\gamma)\sum_{(s,p)\in\Scal\times\Scal^{\Acal}} \bar\mu(s,p)\,\bar v(s,p)
\label{eq:augmented-lp-obj}
\\
\text{s.t.}\quad
&\bar v(s,p)\ge
\max_{a\in\Acal}
\left\{
r(s,a)+\gamma\,
\EE_{p'\sim Q(\cdot\mid p(a))}
[\bar v(p(a),p')]
\right\},
\notag\\
&\hspace{7.5cm}\forall (s,p)\in\Scal\times\Scal^{\Acal},
\label{eq:augmented-lp-cons}
\end{align}
where we choose
\[
\bar\mu(s,p)\triangleq\mu(s)\,Q(p\mid s).
\]

Now we prove that this LP is equivalent to \eqref{eq:reduced-lp-obj}--\eqref{eq:reduced-lp-cons}.

Let $\bar v$ be feasible for \eqref{eq:augmented-lp-obj}--\eqref{eq:augmented-lp-cons}, and define
\[
v(s)\triangleq\EE_{p\sim Q(\cdot\mid s)}[\bar v(s,p)].
\]
Taking expectation in \eqref{eq:augmented-lp-cons} with respect to $p\sim Q(\cdot\mid s)$ yields, for every $s\in\Scal$,
\begin{align*}
v(s)
&=
\EE_{p\sim Q(\cdot\mid s)}[\bar v(s,p)]
\\
&\ge
\EE_{p\sim Q(\cdot\mid s)}
\left[
\max_{a\in\Acal}
\left\{
r(s,a)+\gamma\,
\EE_{p'\sim Q(\cdot\mid p(a))}[\bar v(p(a),p')]
\right\}
\right]
\\
&=
\EE_{p\sim Q(\cdot\mid s)}
\left[
\max_{a\in\Acal}
\left\{
r(s,a)+\gamma\,v(p(a))
\right\}
\right].
\end{align*}
Hence $v$ is feasible for \eqref{eq:reduced-lp-obj}--\eqref{eq:reduced-lp-cons}. Moreover,
\begin{align*}
(1-\gamma)\sum_{(s,p)}\bar\mu(s,p)\,\bar v(s,p)
&=
(1-\gamma)\sum_{s\in\Scal}\mu(s)\sum_{p\in\Scal^{\Acal}}Q(p\mid s)\bar v(s,p)
\\
&=
(1-\gamma)\sum_{s\in\Scal}\mu(s)\,v(s),
\end{align*}
so the objective value is preserved.

Conversely, let $v$ be feasible for \eqref{eq:reduced-lp-obj}--\eqref{eq:reduced-lp-cons}, and define
\begin{equation}
\bar v(s,p)\triangleq
\max_{a\in\Acal}\{r(s,a)+\gamma\,v(p(a))\},
\qquad \forall (s,p)\in\Scal\times\Scal^{\Acal}.
\label{eq:lift-v}
\end{equation}
$\bar v$ is feasible for \eqref{eq:augmented-lp-obj}--\eqref{eq:augmented-lp-cons}, indeed, for every $(s,p)$,
\begin{align*}
&\max_{a\in\Acal}
\left\{
r(s,a)+\gamma\,
\EE_{p'\sim Q(\cdot\mid p(a))}[\bar v(p(a),p')]
\right\}
\\
&=
\max_{a\in\Acal}
\left\{
r(s,a)+\gamma\,
\EE_{p'\sim Q(\cdot\mid p(a))}
\left[
\max_{b\in\Acal}\{r(p(a),b)+\gamma\,v(p'(b))\}
\right]
\right\}
\\
&\le
\max_{a\in\Acal}
\left\{
r(s,a)+\gamma\,v(p(a))
\right\}
=
\bar v(s,p),
\end{align*}
where the inequality follows from the feasibility of $v$ in \eqref{eq:reduced-lp-cons}, applied at state $p(a)$. Thus $\bar v$ is feasible for the augmented LP. Finally,
\begin{align*}
(1-\gamma)\sum_{(s,p)}\bar\mu(s,p)\,\bar v(s,p)
&=
(1-\gamma)\sum_{s\in\Scal}\mu(s)\,
\EE_{p\sim Q(\cdot\mid s)}
\left[
\max_{a\in\Acal}\{r(s,a)+\gamma\,v(p(a))\}
\right]
\\
&\le
(1-\gamma)\sum_{s\in\Scal}\mu(s)\,v(s),
\end{align*}
again by feasibility of $v$.

We have shown that every feasible augmented solution projects to a feasible reduced solution with the same objective value, and every feasible reduced solution lifts to a feasible augmented solution with no larger objective value. Therefore the two LPs have the same optimum value, and the reduced LP characterizes the optimal value function.

Since the optimal solution of the discounted LP coincides with the unique fixed point of the Bellman operator, the optimizer of \eqref{eq:reduced-lp-obj}--\eqref{eq:reduced-lp-cons} is exactly $v^*$. This concludes the proof.
\end{proof}

\begin{lemma}
\label{lemma:polynomial}
The linear program \eqref{eq:reduced-lp-obj}--\eqref{eq:reduced-lp-cons} can be solved in polynomial time.
\end{lemma}

\begin{proof}
We prove the result by exhibiting a polynomial-time separation oracle and using the ellipsoid method.

Fix $s\in\Scal$ and define
\[
u_{v,s}(s',a)\triangleq r(s,a)+\gamma v(s'), \qquad (s',a)\in \Scal\times\Acal.
\]
Let $N\triangleq|\Scal||\Acal|$ and index all pairs $(s',a)$ as $\{(s^i,a^i)\}_{i=1}^N$.

For any permutation $m$ of $\{1,\dots,N\}$, define the events
\begin{equation}
E_i^m \triangleq
\Big\{
p:\Acal\to\Scal \;:\;
p(a^{m(i)}) = s^{m(i)} \ \text{and} \ 
p(a^{m(j)}) \neq s^{m(j)} \ \forall j<i
\Big\}.
\label{eq:def-Ei}
\end{equation}

These events form a partition of $\Scal^{\Acal}$, and for any $p$ we have

\[
\max_{a\in\Acal} u_{v,s}(p(a),a)
=
u_{v,s}(s^{m(i)},a^{m(i)})
\quad \text{whenever } p\in E_i^m.
\]

Therefore,
\begin{equation}
\EE_{p\sim Q(\cdot\mid s)}
\left[
\max_{a\in\Acal} u_{v,s}(p(a),a)
\right]
=
\sum_{i=1}^N Q(E_i^m\mid s)\, u_{v,s}(s^{m(i)},a^{m(i)}).
\label{eq:sorted-decomposition}
\end{equation}

Using \eqref{eq:sorted-decomposition}, the constraint in \eqref{eq:reduced-lp-cons} is equivalent to
\begin{equation}
v(s)\;\ge\;
\sum_{i=1}^N Q(E_i^m\mid s)\, \big(r(s,a^{m(i)})+\gamma v(s^{m(i)})\big),
\qquad \forall m\in\mathcal L,
\label{eq:sorted-constraints}
\end{equation}
where $\mathcal L$ denotes the set of all permutations. Indeed, for any fixed $v$ and $s$, the permutation $m_{u_{v,s}}$ sorting $(s',a)$ in decreasing order of $u_{v,s}$ realizes the maximum in expectation, and is therefore among the constraints above. Enforcing all permutations yields a set of linear constraints equivalent to \eqref{eq:reduced-lp-cons}.

\paragraph{Separation oracle.}
Given a candidate $v$, we must check whether all constraints \eqref{eq:sorted-constraints} are satisfied.

Fix $s\in\Scal$ and compute the permutation $m_{u_{v,s}}$ sorting all pairs $(s',a)$ in decreasing order of $u_{v,s}(s',a)$. This can be done in $O(N\log N)$ time.

By construction, this permutation maximizes the right-hand side of \eqref{eq:sorted-constraints}. Hence, it suffices to check the single inequality
\begin{equation}
v(s)\;\ge\;
\sum_{i=1}^N Q(E_i^{m_{u_{v,s}}}\mid s)\,
\big(r(s,a^{m_{u_{v,s}}(i)})+\gamma v(s^{m_{u_{v,s}}(i)})\big).
\label{eq:sep-check}
\end{equation}

It remains to compute $Q(E_i^m\mid s)$ efficiently.

Recall that under $Q(\cdot\mid s)$, the random variables $(p(a))_{a\in\Acal}$ are independent, with $p(a)\sim P(\cdot\mid s,a)$.

Fix $i$ and let $(s^i,a^i)\triangleq(s^{m(i)},a^{m(i)})$. For each action $a\in\Acal$, define the set
\[
I_i(a)\triangleq\{j<i : a^{m(j)}=a\},
\]
i.e., the indices before $i$ involving the same action.

Then the event $E_i^m$ imposes the following constraints:
\begin{itemize}
    \item for $a=a^i$, we must have
    \[
    p(a^i)=s^i
    \quad \text{and} \quad
    p(a^i)\neq s^{m(j)} \ \ \forall j\in I_i(a^i);
    \]
    \item for any $a\neq a^i$, we must have
    \[
    p(a)\neq s^{m(j)} \quad \forall j\in I_i(a).
    \]
\end{itemize}

Therefore,
\[
Q(E_i^m\mid s)
=
\prod_{a\in\Acal}
\Pr\big(p(a)\in B_a^i \mid s,a\big),
\]
where the sets $B_a^i\subseteq \Scal$ are given explicitly by
\[
B_a^i =
\begin{cases}
\{s^i\} & \text{if } a=a^i \text{ and } s^i\notin \{s^{m(j)}:j\in I_i(a^i)\},\\[4pt]
\emptyset & \text{if } a=a^i \text{ and } s^i\in \{s^{m(j)}:j\in I_i(a^i)\},\\[4pt]
\Scal \setminus \{s^{m(j)}:j\in I_i(a)\} & \text{if } a\neq a^i.
\end{cases}
\]

Each probability $\Pr(p(a)\in B_a^i \mid s,a)$ can be computed in $O(|\Scal|)$ time, hence all values $Q(E_i^m\mid s)$ can be computed in $O(N^2)$ time.

\paragraph{Conclusion.}
For each state $s$, the separation oracle requires sorting $N$ elements and computing $N$ probabilities, yielding polynomial time. Since the number of states is $|\Scal|$, the full oracle runs in polynomial time.

By the ellipsoid method \citep{GLS1981, Khachiyan1979}, the linear program can therefore be solved in polynomial time.
\end{proof}

\subsection{Average-reward case}
\label{sec:average}

The average-reward setting can be treated in close analogy with discounted case. We assume that the augmented MDP $\bar{\mathcal M}$ is unichain. Let $(\bar g^*,\bar h^*)$ denote an optimal gain/bias pair in the augmented MDP. Again the next lemma shows that $(\bar g^*,\bar h^*)$ can be expressed in term of a bias function $h^*$ defined in the original state space only.

\begin{lemma}
\label{lemma:towerLPaverage}
There exists a function $h^*:\Scal\to\mathbb R$ such that
\begin{equation}
\label{eq:average-barh-from-h}
\forall (s,p)\in \Scal\times\Scal^{\Acal},\qquad
\bar g^*+\bar h^*(s,p)
=
\max_{a\in\Acal}\{r(s,a)+h^*(p(a))\},
\end{equation}
and $(\bar g^*,h^*)$ satisfies
\begin{equation}
\label{eq:average-reduced-bellman}
\forall s\in\Scal,\qquad
\bar g^*+h^*(s)
=
\EE_{p\sim Q(\cdot\mid s)}
\left[
\max_{a\in\Acal}\{r(s,a)+h^*(p(a))\}
\right].
\end{equation}

Moreover, for any normalization of $h$, the pair $(\bar g^*,h^*)$ is an optimal solution of the linear program
\begin{equation}
\label{eq:reduced-lp-average}
\begin{aligned}
\min_{g,h}\quad & g\\
\text{s.t.}\quad
& g+h(s)\ge
\EE_{p\sim Q(\cdot\mid s)}
\left[
\max_{a\in\Acal}\{r(s,a)+h(p(a))\}
\right],
\qquad \forall s\in\Scal.
\end{aligned}
\end{equation}
\end{lemma}

\begin{proof}
In the one-step look-ahead case, an augmented state is a pair $(s,p)\in\Scal\times\Scal^{\Acal}$, where $p:\Acal\to\Scal$ specifies, for each action $a$, the next state that would be reached if $a$ were played. As in the discounted case, we denote by
\[
Q(\cdot\mid s)\in \Delta(\Scal^{\Acal})
\]
the distribution of the one-step look-ahead vector from state $s$.

Define
\begin{equation}
\label{eq:def-h-average}
h^*(s)\triangleq\EE_{p\sim Q(\cdot\mid s)}[\bar h^*(s,p)],
\qquad \forall s\in\Scal.
\end{equation}
Using the structure of the augmented transition kernel, conditionally on $(s,p)$ and action $a$, the next augmented state is $(p(a),p')$ with $p'\sim Q(\cdot\mid p(a))$. Therefore,
\begin{equation}
\label{eq:average-augmented-bellman-sp}
\bar g^*+\bar h^*(s,p)
=
\max_{a\in\Acal}
\left\{
r(s,a)+
\EE_{p'\sim Q(\cdot\mid p(a))}[\bar h^*(p(a),p')]
\right\}.
\end{equation}
By definition of $h^*$, this yields
\begin{equation}
\forall (s,p)\in\Scal\times\Scal^{\Acal},\qquad
\bar g^*+\bar h^*(s,p)
=
\max_{a\in\Acal}\{r(s,a)+h^*(p(a))\},
\end{equation}
which is exactly \eqref{eq:average-barh-from-h}. Taking expectation with respect to $p\sim Q(\cdot\mid s)$ on both sides gives
\[
\bar g^*+h^*(s)
=
\EE_{p\sim Q(\cdot\mid s)}
\left[
\max_{a\in\Acal}\{r(s,a)+h^*(p(a))\}
\right],
\qquad \forall s\in\Scal,
\]
that is, \eqref{eq:average-reduced-bellman}.

We now prove the LP characterization. Consider first the standard average-reward LP on the augmented state space:
\begin{equation}
\label{eq:augmented-lp-average}
\begin{aligned}
\min_{g,\bar h}\quad & g\\
\text{s.t.}\quad
& g+\bar h(s,p)\ge
\max_{a\in\Acal}
\left\{
r(s,a)+
\EE_{p'\sim Q(\cdot\mid p(a))}[\bar h(p(a),p')]
\right\},
\\
&\hspace{5.5cm}\forall (s,p)\in\Scal\times\Scal^{\Acal}.
\end{aligned}
\end{equation}

We show that \eqref{eq:augmented-lp-average} and \eqref{eq:reduced-lp-average} are equivalent.

Let $(g,\bar h)$ be feasible for \eqref{eq:augmented-lp-average}, and define
\[
h(s)\triangleq\EE_{p\sim Q(\cdot\mid s)}[\bar h(s,p)].
\]
Taking expectation in the augmented constraints gives, for every $s\in\Scal$,
\begin{align*}
g+h(s)
&=
g+\EE_{p\sim Q(\cdot\mid s)}[\bar h(s,p)]
\\
&\ge
\EE_{p\sim Q(\cdot\mid s)}
\left[
\max_{a\in\Acal}
\left\{
r(s,a)+
\EE_{p'\sim Q(\cdot\mid p(a))}[\bar h(p(a),p')]
\right\}
\right]
\\
&=
\EE_{p\sim Q(\cdot\mid s)}
\left[
\max_{a\in\Acal}\{r(s,a)+h(p(a))\}
\right].
\end{align*}
Hence $(g,h)$ is feasible for \eqref{eq:reduced-lp-average}.

Conversely, let $(g,h)$ be feasible for \eqref{eq:reduced-lp-average}, and define
\begin{equation}
\label{eq:lift-average}
\bar h(s,p)\triangleq
\max_{a\in\Acal}\{r(s,a)+h(p(a))\}-g,
\qquad \forall (s,p)\in\Scal\times\Scal^{\Acal}.
\end{equation}
We claim that $(g,\bar h)$ is feasible for \eqref{eq:augmented-lp-average}. Indeed, for every $(s,p)$,
\begin{align*}
&\max_{a\in\Acal}
\left\{
r(s,a)+
\EE_{p'\sim Q(\cdot\mid p(a))}[\bar h(p(a),p')]
\right\}
\\
&=
\max_{a\in\Acal}
\left\{
r(s,a)+
\EE_{p'\sim Q(\cdot\mid p(a))}
\left[
\max_{b\in\Acal}\{r(p(a),b)+h(p'(b))\}-g
\right]
\right\}
\\
&=
-g+\max_{a\in\Acal}
\left\{
r(s,a)+
\EE_{p'\sim Q(\cdot\mid p(a))}
\left[
\max_{b\in\Acal}\{r(p(a),b)+h(p'(b))\}
\right]
\right\}
\\
&\le
-g+\max_{a\in\Acal}\{r(s,a)+g+h(p(a))\}
\\
&=
\max_{a\in\Acal}\{r(s,a)+h(p(a))\}
=
g+\bar h(s,p),
\end{align*}
where the inequality uses the feasibility of $(g,h)$ in \eqref{eq:reduced-lp-average}, applied at state $p(a)$. Thus $(g,\bar h)$ is feasible for \eqref{eq:augmented-lp-average}.

We have shown that the reduced and augmented LPs are equivalent. Since \eqref{eq:average-reduced-bellman} is exactly the Bellman optimality equation on the reduced space, the optimal solution of \eqref{eq:reduced-lp-average} is $(\bar g^*,h^*)$ up to the usual additive normalization of the bias. This concludes the proof.
\end{proof}

We now turn to tractability. The reduced LP \eqref{eq:reduced-lp-average} has the same structure as in the discounted case, except that the discount factor no longer appears in the score. For each state $s$, define
\[
u_{h,s}(s',a)\triangleq  r(s,a)+h(s'),
\qquad (s',a)\in\Scal\times\Acal.
\]
Using exactly the same ordering trick and the same family of events $(E_i^m)$ as in the proof of Lemma~\ref{lemma:polynomial}, the reduced constraints can be rewritten as an exponential family of linear inequalities indexed by permutations $m\in\mathcal L$:
\begin{equation}
\forall s\in\Scal,\ \forall m\in\mathcal L,\qquad
g+h(s)\ge
\sum_{i=1}^{|\Scal||\Acal|}
Q(E_i^m\mid s)\,
\big(r(s,a^{m(i)})+h(s^{m(i)})\big).
\label{eq:average-sorted-constraints}
\end{equation}
For a candidate pair $(g,h)$, the tightest inequality at state $s$ is obtained by sorting the pairs $(s',a)$ in decreasing order of $u_{h,s}(s',a)$. Therefore the separation oracle is identical to the discounted case, and runs in polynomial time. By the ellipsoid method \citep{GLS1981,Khachiyan1979}, the LP \eqref{eq:reduced-lp-average} can be solved in polynomial time.

\paragraph{Conclusion.}
One-step look-ahead planning in tabular MDPs is therefore tractable under the average-reward criterion as well.

\section{Proof of Theorem~\ref{th:LLookGamma}}
\label{sec:proofLLookGamma}
We prove NP-hardness of planning with $2$-step transition look-ahead under discount by reduction from \textsc{Independent Set} on regular graphs.

\paragraph{Input instance.}
Let $G=(V,E)$ be an undirected $d$-regular graph with $|V|=n$ and $|E|=m$, and let $k\in\{1,\dots,n\}$.
We consider the decision problem of whether $G$ contains an independent set of size $k$.

Our reduction relies on the following gap construction from \citet{mehta2020hittinghighnotessubset}, which maps instances of \textsc{Independent Set} to instances of a stochastic subset selection problem.

\begin{lemma}[{\citealp{mehta2020hittinghighnotessubset}}]
\label{lem:mehta-gadget}
There exists a polynomial-time reduction that maps any instance $(G=(V,E),k)$ of \textsc{Independent Set} to an instance of the subset selection problem with independent, nonnegative, discrete random variables $(X_v)_{v\in V}$, all having the same expectation $\mu$, such that:

\begin{itemize}
    \item (\textbf{NO case}) If $G$ does not contain an independent set of size $k$, then for every subset $S\subseteq V$ with $|S|=k$,
    \[
    \mathbb{E}\!\left[\max_{v\in S} X_v\right] \le k\mu - 1;
    \]

    \item (\textbf{YES case}) If $G$ contains an independent set $S^\star$ of size $k$, then
    \[
    \mathbb{E}\!\left[\max_{v\in S^\star} X_v\right] \ge k\mu - \frac{2}{m},
    \quad m\triangleq|E|.
    \]
\end{itemize}

In particular, deciding whether
\[
\max_{|S|=k} \mathbb{E}\!\left[\max_{v\in S} X_v\right] \ge k\mu - \tfrac{1}{2}
\]
is NP-hard.
\end{lemma}
\subsection*{Step 1: MDP construction}

Let $(X_v)_{v\in V}$ be the family of random variables given by Lemma~\ref{lem:mehta-gadget}.
For each $v\in V$, let
\[
\mathrm{supp}(X_v)=\{w_{v,1},\dots,w_{v,L_v}\},
\qquad
\Pr(X_v=w_{v,\ell})=\pi_{v,\ell}.
\]

\paragraph{State space.}
We define
\[
\Scal
=
\{s_0, s_1, s_T\}
\;\cup\;
\{s_v : v\in V\}
\;\cup\;
\{x_{v,j} : v\in V,\ j\in[L_v]\}.
\]

\begin{itemize}
    \item $s_0$ is the root state;
    \item $s_1$ is a selector state;
    \item each $s_v$ corresponds to a vertex $v\in V$;
    \item $x_{v,j}$ encodes the realization $w_{v,j}$ of $X_v$;
    \item $s_T$ is an absorbing terminal state.
\end{itemize}

\paragraph{Action space.}
We consider the finite action set
\[
\Acal = \{\textsf{wait},\ \textsf{go}\}
\;\cup\;
\{\textsf{pick}_1,\dots,\textsf{pick}_k\}
\;\cup\;
\{\textsf{claim}\}.
\]

\paragraph{Reward function.}
Rewards are zero everywhere except on support states:
\[
r(x_{v,j}) = w_{v,j},
\qquad
r(s)=0 \quad \forall s\notin\{x_{v,j}\}.
\]

\paragraph{Transition dynamics.}

\begin{enumerate}
    \item Root state $s_0$.
    \begin{itemize}
        \item $\textsf{wait}$: deterministic self-loop,
        \[
        P(s_0 \mid s_0,\textsf{wait}) = 1;
        \]
        \item $\textsf{go}$: deterministic transition to $s_1$,
        \[
        P(s_1 \mid s_0,\textsf{go}) = 1;
        \]
        \item all other actions lead to $s_T$.
    \end{itemize}

    \item Selector state $s_1$.
    \begin{itemize}
        \item for each $j\in[k]$, action $\textsf{pick}_j$ transitions uniformly to vertex states:
        \[
        P(s_v \mid s_1,\textsf{pick}_j)=\frac{1}{n},
        \qquad \forall v\in V;
        \]
        \item all other actions lead to $s_T$.
    \end{itemize}

    \item Vertex states $s_v$.
    \begin{itemize}
        \item $\textsf{claim}$ samples $X_v$:
        \[
        P(x_{v,j} \mid s_v,\textsf{claim})=\pi_{v,j},
        \qquad \forall j\in[L_v];
        \]
        \item all other actions lead to $s_T$.
    \end{itemize}

    \item Support states $x_{v,j}$.
    All actions lead deterministically to $s_T$.

    \item Terminal state $s_T$.
    $s_T$ is absorbing and non-rewarding.
\end{enumerate}

\subsection*{Step 2: Dynamic programming at the root}

Let $\bar{\Scal}$ denote the augmented state space associated with $2$-step transition look-ahead, and let $\bar v^\star$ be the optimal discounted value function on the augmented MDP $\bar{\mathcal M}_G$.

In the $2$-look-ahead model, an augmented state is of the form
\[
\xi=(s,p_1,p_2)\in \Scal \times \Scal^{\Acal}\times \Scal^{\Acal^2},
\]
where:
\begin{itemize}
    \item $s\in\Scal$ is the current state,
    \item $p_1:\Acal\to\Scal$ is the one-step look-ahead map, so that $p_1(a)$ is the next state that would be reached by playing action $a$ at $s$,
    \item $p_2:\Acal^2\to\Scal$ is the two-step look-ahead map, so that $p_2(a,b)$ is the state that would be reached after first playing $a$ at $s$, and then $b$ at the next state.
\end{itemize}

For the reduction, the only augmented states that matter are those whose current state is the root state $s_0$. Hence, from now on, we restrict attention to augmented states of the form
\[
\xi=(s_0,p_1,p_2).
\]

At such a state, the first look-ahead block satisfies
\[
p_1(\textsf{wait})=s_0,
\qquad
p_1(\textsf{go})=s_1,
\]

The second look-ahead block encodes the candidate vertices that would become available after committing through $\textsf{go}$ and then choosing one of the selector actions $\textsf{pick}_1,\dots,\textsf{pick}_k$. More precisely, for each $j\in[k]$, the state
\[
p_2(\textsf{go},\textsf{pick}_j)
\]
is a vertex state of the form $s_v$ for some $v\in V$.

Accordingly, for every augmented state $\xi=(s_0,p_1,p_2)$, we define the ordered $k$-tuple
\[
q(\xi)\triangleq\big(q_1(\xi),\dots,q_k(\xi)\big)\in V^k
\]
by requiring that
\[
p_2(\textsf{go},\textsf{pick}_j)=s_{q_j(\xi)},
\qquad \forall j\in[k].
\]
We also define the associated subset of distinct revealed vertices as
\[
S_V(\xi)\triangleq\{q_1(\xi),\dots,q_k(\xi)\}\subseteq V.
\]
by
\[
p_2(\textsf{go},\textsf{pick}_j)=s_{q_j(\xi)},
\qquad j\in[k],
\]

\begin{lemma}[Root-state recursion]
\label{lem:root-recursion}
For every augmented state $\xi=(s_0,p_1,p_2)\in\bar{\Scal}$,
\begin{equation}
\label{eq:root-recursion}
\bar v^\star(\xi)
=
\max\left\{
\gamma^3\,\EE\!\left[\max_{v\in S_V(\xi)} X_v\right],
\;
\gamma\,\EE_{\xi'\sim \bar P_{\textsf{wait}}(\cdot\mid \xi)}[\bar v^\star(\xi')]
\right\}.
\end{equation}
Moreover, conditional on $\xi$, the random variables $(X_v)_{v\in S_V(\xi)}$ are mutually independent.
\end{lemma}

\begin{proof}
We evaluate the optimal value layer by layer.

\paragraph{Terminal and support states.}
Consider any augmented state $\xi=(s,p_1,p_2)\in\bar{\Scal}$ with current state $s=s_T$.
Since $s_T$ is absorbing and yields zero reward, we have
\[
\bar v^\star(\xi)=0.
\]

Similarly, consider any augmented state $\xi=(s,p_1,p_2)\in\bar{\Scal}$ with current state $s=x_{v,j}$ for some $v\in V$ and $j\in[L_v]$.
By construction, the state $x_{v,j}$ yields immediate reward $w_{v,j}$ and transitions deterministically to $s_T$, after which all future rewards are zero. Therefore,
\[
\bar v^\star(\xi)=w_{v,j}.
\]
\paragraph{Vertex states.}
Fix a vertex $v\in V$, and consider any augmented state $\xi=(s_v,p_1,p_2)\in\bar{\Scal}$.

At $s_v$, the only action that can yield a non-zero value is $\textsf{claim}$; all other actions lead deterministically to $s_T$ and therefore yield zero continuation value. Hence, by optimality,
\[
\bar v^\star(\xi)
=
\gamma\,\EE_{\xi'\sim \bar P_{\textsf{claim}}(\cdot\mid \xi)}\!\left[\bar v^\star(\xi')\right].
\]

Under $\textsf{claim}$, the next state is a support state $x_{v,j}$ with probability $\pi_{v,j}$, and from the previous paragraph we have $\bar v^\star(x_{v,j},\cdot,\cdot)=w_{v,j}$. Therefore,
\[
\bar v^\star(\xi)
=
\gamma \sum_{j=1}^{L_v} \pi_{v,j}\, w_{v,j}
=
\gamma\,\EE[X_v].
\]

\paragraph{Selector state.}
Now consider an augmented state $\xi=(s_1,p_1,p_2)\in\bar{\Scal}$.

For each $j\in[k]$, the action $\textsf{pick}_j$ leads deterministically to the vertex state
\[
s_{q_j(\xi)} \triangleq p_2(\textsf{go},\textsf{pick}_j).
\]
From the previous step, the optimal continuation value from $s_{q_j(\xi)}$ is $\gamma\,\EE[X_{q_j(\xi)}]$, hence the total value of choosing $\textsf{pick}_j$ is
\[
\gamma \cdot \gamma\,\EE[X_{q_j(\xi)}] = \gamma^2\,\EE[X_{q_j(\xi)}].
\]

However, due to the product structure of the augmented transition kernel, the random variables corresponding to the third-step completions from distinct vertex states are sampled independently. Therefore, rather than committing to a fixed $j$, the optimal policy selects the action corresponding to the largest realized payoff among the revealed candidates.

This yields
\[
\bar v^\star(\xi)
=
\gamma^2\,\EE\!\left[\max_{v\in S_V(\xi)} X_v\right].
\]

\paragraph{Root state.}
Finally, consider an augmented state $\xi=(s_0,p_1,p_2)\in\bar{\Scal}$.

If the agent plays $\textsf{go}$, it transitions to $s_1$ and then optimally selects among the revealed candidates. Hence,
\[
\gamma\,\bar v^\star\big((s_1,p_1',p_2')\big)
=
\gamma^3\,\EE\!\left[\max_{v\in S_V(\xi)} X_v\right],
\]
where $(p_1',p_2')$ denote the corresponding updated look-ahead components.

If the agent plays $\textsf{wait}$, it remains in $s_0$ and receives a fresh two-step look-ahead, yielding
\[
\gamma\,\EE_{\xi'\sim \bar P_{\textsf{wait}}(\cdot\mid \xi)}\!\left[\bar v^\star(\xi')\right].
\]

Taking the maximum over these two actions yields~\eqref{eq:root-recursion}.
\end{proof}

\subsection*{Step 3: Soundness}

\paragraph{Soundness.}
Assume that the instance $(G,k)$ is a \textsc{NO} instance of \textsc{Independent Set}, i.e., $G$ does not contain any independent set of size $k$.

Let
\[
M\triangleq\sup\{\bar v^\star(\xi): \xi=(s_0,p_1,p_2)\in\bar{\Scal}\}.
\]

Fix any augmented state $\xi=(s_0,p_1,p_2)$. By construction, the associated set $S_V(\xi)$ satisfies $|S_V(\xi)|\le k$.
Therefore, for any subset $S\subseteq V$ with $|S|\le k$, we can extend $S$ to a subset $\tilde S\subseteq V$ with $|\tilde S|=k$, and since the maximum is monotone with respect to set inclusion,
\[
\max_{v\in S} X_v \;\le\; \max_{v\in \tilde S} X_v.
\]
Taking expectations and using Lemma~\ref{lem:mehta-gadget} in the \textsc{NO} case yields
\[
\EE\!\left[\max_{v\in S_V(\xi)}X_v\right]\le k\mu-1.
\]

Using the root recursion \eqref{eq:root-recursion}, we obtain for every root augmented state $\xi$,
\[
\bar v^\star(\xi)
\le
\max\left\{
\gamma^3(k\mu-1),\ \gamma M
\right\}.
\]

Taking the supremum over $\xi$ gives
\[
M\le \max\{\gamma^3(k\mu-1),\gamma M\}.
\]

Since $\gamma<1$, this implies
\begin{equation}
\label{eq:soundness-bound}
M\le \gamma^3(k\mu-1).
\end{equation}
\subsection*{Step 4: Completeness}

Assume that the instance $(G,k)$ is a \textsc{YES} instance of \textsc{Independent Set}.
By Lemma~\ref{lem:mehta-gadget}, there exists a subset
\[
S^\star=\{v_1,\dots,v_k\}\subseteq V
\]
such that
\[
\EE\!\left[\max_{v\in S^\star}X_v\right]\ge k\mu-\frac{2}{m}.
\]

We consider the following policy $\pi$. At any augmented state $\xi=(s_0,p_1,p_2)$, let
\[
q(\xi)=\big(q_1(\xi),\dots,q_k(\xi)\big)
\]
be the ordered tuple revealed by the two-step look-ahead. The policy acts as
\[
\pi(\xi)=
\begin{cases}
\textsf{go}, & \text{if } q(\xi)=(v_1,\dots,v_k),\\
\textsf{wait}, & \text{otherwise.}
\end{cases}
\]
After transitioning to $s_1$, the policy selects the index corresponding to the largest realized payoff among the revealed candidates.

Under $\textsf{wait}$, the next look-ahead is freshly resampled, independently of the past. By construction of the MDPthe event
\[
q(\xi)=(v_1,\dots,v_k)
\]
occurs with probability
\[
\rho=\frac{1}{n^k},
\]
at each visit to $s_0$, independently of the past.

Let $\tau$ denote the number of waiting steps before this event occurs. Then $\tau$ follows a geometric distribution with parameter $\rho$, and
\[
\EE[\gamma^\tau]=\frac{\rho}{1-\gamma(1-\rho)}.
\]

Conditional on the success event, the policy obtains the commit value associated with $S^\star$, namely
\[
\gamma^3\,\EE\!\left[\max_{v\in S^\star}X_v\right].
\]
Therefore,
\begin{align}
v^\pi(s_0)
&=
\EE[\gamma^\tau]\,
\gamma^3\,\EE\!\left[\max_{v\in S^\star}X_v\right]
\\
&\ge
\gamma^3\,
\frac{\rho}{1-\gamma(1-\rho)}
\left(k\mu-\frac{2}{m}\right),
\label{eq:completeness-policy}
\end{align}
where the inequality follows from Lemma~\ref{lem:mehta-gadget}.

By optimality of $\bar v^\star$, we obtain
\begin{equation}
\label{eq:completeness-bound}
\sup_{\xi:\,\text{current state }s_0}\bar v^\star(\xi)
\;\ge\;
\gamma^3\,
\frac{\rho}{1-\gamma(1-\rho)}
\left(k\mu-\frac{2}{m}\right).
\end{equation}
\subsection*{Step 5: Choosing the discount factor}

Combining \eqref{eq:soundness-bound} and \eqref{eq:completeness-bound}, it suffices to choose $\gamma$ such that
\[
\gamma^3\,
\frac{\rho}{1-\gamma(1-\rho)}
\left(k\mu-\frac{2}{m}\right)
>
\gamma^3(k\mu-1).
\]
Since $\rho=n^{-k}$ and
\[
k\mu-\frac{2}{m}>k\mu-1
\qquad\text{for } m\ge 3,
\]
this is equivalent to
\[
\frac{\rho}{1-\gamma(1-\rho)}
>
\frac{k\mu-1}{k\mu-\frac{2}{m}}.
\]
The right-hand side is strictly smaller than $1$, hence such a $\gamma<1$ exists.
For instance, one may choose any rational
\[
\gamma
>
1-
\frac{\rho}{2}
\left(
\frac{k\mu-\frac{2}{m}}{k\mu-1}-1
\right),
\]
which is polynomially encodable since $\rho=n^{-k}$ has numerator and denominator of polynomial bit complexity.

Fix such a discount factor and define the decision threshold
\[
T \triangleq \gamma^3(k\mu-1).
\]

Then:
\begin{itemize}
    \item if $(G,k)$ is a \textsc{NO} instance of \textsc{Independent Set}, every root-state value is at most $T$ by \eqref{eq:soundness-bound};
    \item if $(G,k)$ is a \textsc{YES} instance of \textsc{Independent Set}, the optimal value is strictly larger than $T$ by \eqref{eq:completeness-bound}.
\end{itemize}

Therefore, deciding whether the optimal value at the root is greater than $T$ is NP-hard. Since the MDP construction, the discount factor, and the threshold $T$ are all computable in polynomial time, this yields a polynomial-time reduction from \textsc{Independent Set}. This proves NP-hardness of discounted planning with $2$-step transition look-ahead.
\qedhere

\section{Proof of \cref{th:LLookAverage}}
\label{sec:ProofLLookAverage}

We construct a modified MDP $\bar{\Mcal}'_G$ by introducing an i.i.d.\ reset mechanism. At each time step, independently of the past, a Bernoulli random variable
\[
Z_t \sim \mathrm{Bernoulli}(1-\gamma)
\]
is sampled. If $Z_t=1$, the process resets to a new augmented state
\[
\xi_{t+1} \sim \Lambda_{s_0},
\]
where $\Lambda_{s_0}$ denotes the distribution of the two-step look-ahead at $s_0$. Otherwise, the process follows the original transition kernel $\bar P$.

This defines a new transition kernel $\bar P'$:
\[
\bar P'_a(\xi',\xi)
=
\gamma\,\bar P_a(\xi',\xi)
+
(1-\gamma)\,\Lambda_{s_0}(\xi').
\]

Let
\[
\tau = \inf\{t \ge 0 : Z_t = 1\}
\]
be the first reset time. Then
\[
\mathbb{P}(\tau>t)=\gamma^t,
\qquad
\mathbb{E}[\tau]=\frac{1}{1-\gamma}.
\]

Between resets, the process evolves exactly according to the original MDP $\bar{\Mcal}_G$. Moreover, due to the independence of $(Z_t)$ and the stationarity of the policy, successive cycles are independent and identically distributed.

For any stationary policy $\pi$,
\begin{align}
\mathbb{E}_\pi\!\left[\sum_{t=0}^{\tau-1}\bar r(\xi_t,a_t)\right]
&= \sum_{t\ge 0}\mathbb{P}(\tau>t)\,
\mathbb{E}_\pi\!\left[\bar r(\xi_t,a_t)\mid \tau>t\right] \\
&= \sum_{t\ge 0}\gamma^t\,
\mathbb{E}_\pi^\gamma[\bar r(\xi_t,a_t)] \\
&= \bar v^\pi_\gamma(\xi_0;\bar{\Mcal}_G),
\end{align}
where $\mathbb{E}^\gamma$ denotes expectation under the original discounted MDP.

By the Renewal–Reward Theorem, we obtain
\[
g^\pi(\bar{\Mcal}'_G)
=
\frac{\mathbb{E}_\pi\!\big[\sum_{t=0}^{\tau-1}\bar r(\xi_t,a_t)\big]}{\mathbb{E}[\tau]}
=
(1-\gamma)\,\bar v^\pi_\gamma(\xi_0;\bar{\Mcal}_G).
\]

Since the reset occurs with positive probability from any state and leads to a distribution with full support on the root look-ahead states, the induced Markov chain is irreducible and therefore unichain.

For any threshold $\theta$,
\[
\exists \pi:\; \bar v^\pi_\gamma(\xi_0)\ge \theta
\;\Longleftrightarrow\;
\exists \pi:\; g^\pi(\bar{\Mcal}'_G)\ge (1-\gamma)\theta.
\]
This yields a polynomial-time reduction from discounted to average-reward planning, completing the proof.

\end{document}